\newcommand{\lyxmathsym}[1]{\ifmmode\begingroup\def\b@ld{bold}
  \text{\ifx\math@version\b@ld\bfseries\fi#1}\endgroup\else#1\fi}
\let\SF@@footnote\footnote
\def\footnote{\ifx\protect\@typeset@protect
    \expandafter\SF@@footnote
  \else
    \expandafter\SF@gobble@opt
  \fi
}
\def\csname SF@gobble@opt \endcsname{\@ifnextchar[
  \SF@gobble@twobracket
  \@gobble
}
\edef\SF@gobble@opt{\noexpand\protect
  \expandafter\noexpand\csname SF@gobble@opt \endcsname}
\def\SF@gobble@twobracket[#1]#2{}
\providecommand{\tabularnewline}{\\}
\theoremstyle{plain}
\newtheorem{thm}{\protect\theoremname}
\theoremstyle{plain}
\newtheorem{prop}[thm]{\protect\propositionname}
\theoremstyle{plain}
\newtheorem{lem}[thm]{\protect\lemmaname}
\patchcmd\@combinedblfloats{\box\@outputbox}{\unvbox\@outputbox}{}{\errmessage{\noexpand patch failed}}
\renewcommand{\cite}{\citep}
\providecommand{\lemmaname}{Lemma}
\providecommand{\propositionname}{Proposition}
\providecommand{\theoremname}{Theorem}
\begin{document}
\global\long\def\SAM{\mathrm{SAM}}%

\global\long\def\softmax{\mathrm{softmax}}%

\global\long\def\sigmoid{\mathrm{sigmoid}}%

\thispagestyle{empty}
\twocolumn[ 
\icmltitle{Self-Attentive Associative Memory}
\begin{icmlauthorlist} 
\icmlauthor{Hung Le}{to} 
\icmlauthor{Truyen Tran}{to} 
\icmlauthor{Svetha Venkatesh}{to} 
\end{icmlauthorlist}
\icmlaffiliation{to}{Applied AI Institute, Deakin University, Geelong, Australia} 
\icmlkeywords{Machine Learning, ICML}
\icmlcorrespondingauthor{Hung Le}{thai.le@deakin.edu.au}
\vskip 0.3in
]
\printAffiliationsAndNotice{}
\begin{abstract}
Heretofore, neural networks with external memory are restricted to
single memory with lossy representations of memory interactions. A
rich representation of relationships between memory pieces urges a
high-order and segregated relational memory. In this paper, we propose
to separate the storage of individual experiences (item memory) and
their occurring relationships (relational memory). The idea is implemented
through a novel Self-attentive Associative Memory (SAM) operator.
Found upon outer product, SAM forms a set of associative memories
that represent the hypothetical high-order relationships between arbitrary
pairs of memory elements, through which a relational memory is constructed
from an item memory. The two memories are wired into a single sequential
model capable of both memorization and relational reasoning. We achieve
competitive results with our proposed two-memory model in a diversity
of machine learning tasks, from challenging synthetic problems to
practical testbeds such as geometry, graph, reinforcement learning,
and question answering. 
\end{abstract}

\section{Introduction}

Humans excel in remembering items and the relationship between them
over time \cite{olson2006working,konkel2009relational}. Numerous
neurocognitive studies have revealed this striking ability is largely
attributed to the perirhinal cortex and hippocampus, two brain regions
that support item memory (e.g., objects, events) and relational memory
(e.g., locations of objects, orders of events), respectively \cite{cohen1997memory,buckley2005role}.
Relational memory theory posits that there exists a representation
of critical relationships amongst arbitrary items, which allows inferential
reasoning capacity \cite{eichenbaum1993memory,zeithamova2012hippocampus}.
It remains unclear how the hippocampus can select the stored items
in clever ways to unearth their hidden relationships and form the
relational representation.

Research on artificial intelligence has focused on designing item-based
memory models with recurrent neural networks (RNNs) \cite{hopfield1982neural,elman1990finding,hochreiter1997long}
and memory-augmented neural networks (MANNs) \cite{graves2014neural,graves2016hybrid,le2018variational,le2018learning}.
These memories support long-term retrieval of previously seen items
yet lack explicit mechanisms to represent arbitrary relationships
amongst the constituent pieces of the memories. Recently, further
attempts have been made to foster relational modeling by enabling
memory-memory interactions, which is essential for relational reasoning
tasks \cite{santoro2017simple,santoro2018relational,vaswani2017attention}.
However, no effort has been made to model jointly item memory and
relational memory explicitly.

We argue that dual memories in a single system are crucial for solving
problems that require both memorization and relational reasoning.
Consider graphs wherein each node is associated with versatile features--
as example a road network structure where each node is associated
with diverse features: \emph{graph 1} where the nodes are building
landmarks and \emph{graph 2} where the nodes are flora details. The
goal here is to reason over the structure and output the associated
features of the nodes instead of the pointer or index to the nodes.
Learning to output associated node features enables generalization
to entirely novel features, i.e., a model can be trained to generate
a navigation path with building landmarks (\emph{graph 1}) and tested
in the novel context of generating a navigation path with flora landmarks
(\emph{graph 2}). This may be achieved if the model stores the features
and structures into its item and relational memory, separately, and
reason over the two memories using rules acquired during training. 

Another example requiring both item and relational memory can be understood
by amalgamating the $N^{th}$-farthest \cite{santoro2018relational}
and associative recall \cite{graves2014neural} tasks. $N^{th}$-farthest
requires relational memory to return a fixed one-hot encoding representing
the index to the $N^{th}$-farthest item, while associative recall
returns the item itself, requiring item memory. If these tasks are
amalgamated to compose Relational Associative Recall (RAR) -- return
the $N^{th}$-farthest item from a query (see $\mathsection$ \ref{subsec:Algorithmic-synthetic-tasks}),
it is clear that both item and relational memories are required. 

Three limitations of the current approaches are: $\left(i\right)$
the relational representation is often computed without storing, which
prevents reusing the precomputed relationships in sequential tasks
\cite{vaswani2017attention,santoro2017simple}, $\left(ii\right)$
few works that manage both items and the relationships in a single
memory, make it hard to understand how relational reasoning occurs
\cite{santoro2018relational,schlag2018learning}, $\left(iii\right)$
the memory-memory relationship is coarse since it is represented as
either dot product attention \cite{vaswani2017attention} or weighted
summation via neural networks \cite{santoro2017simple}. Concretely,
the former uses a scalar to measure cosine distance between two vectors
and the later packs all information into one vector via only additive
interactions. 

To overcome the current limitations, we hypothesize a two-memory model,
in which the relational memory exists separately from the item memory.
To maintain a rich representation of the relationship between items,
the relational memory should be higher-order than the item memory.
That is, the relational memory stores multiple relationships, each
of which should be represented by a matrix rather than a scalar or
vector. Otherwise, the capacity of the relational memory is downgraded
to that of the item memory. Finally, as there are two separate memories,
they must communicate to enrich the representation of one another. 

To implement our hypotheses, we introduce a novel operator that facilitates
the communication from the item memory to the relational memory. The
operator, named Self-attentive Associative Memory (SAM) leverages
the dot product attention with our outer product attention. Outer
product is critical for constructing higher-order relational representations
since it retains bit-level interactions between two input vectors,
thus has potential for rich representational learning \cite{smolensky1990tensor}.
SAM transforms a second-order (matrix) item memory into a third-order
relational representation through two steps. First, SAM decodes a
set of patterns from the item memory. Second, SAM associates each
pair of patterns using outer product and sums them up to form a hetero-associative
memory. The memory thus stores relationships between stored items
accumulated across timesteps to form a relational memory.

The role of item memory is to memorize the input data over time. To
selectively encode the input data, the item memory is implemented
as a gated auto-associative memory. Together with previous read-out
values from the relational memory, the item memory is used as the
input for SAM to construct the relational memory. In return, the relational
memory transfers its knowledge to the item memory through a distillation
process. The backward transfer triggers recurrent dynamics between
the two memories, which may be essential for simulating hippocampal
processes \cite{kumaran2012generalization}. Another distillation
process is used to transform the relational memory to the output value. 

Taken together, we contribute a new neural memory model dubbed SAM-based
Two-memory Model (STM) that takes inspiration from the existence of
both item and relational memory in human brain \cite{konkel2009relational}.
In this design, the relational memory is higher-order than the item
memory and thus necessitates a core operator that manages the information
exchange from the item memory to the relational memory. The operator,
namely Self-attentive Associative Memory (SAM), utilizes outer product
to construct a set of hetero-associative memories representing relationships
between arbitrary stored items. We apply our model to a wide range
of tasks that may require both item and relational memory: various
algorithmic learning, geometric and graph reasoning, reinforcement
learning and question-answering tasks. Several analytical studies
on the characteristics of our proposed model are also given in the
Appendix.

\section{Methods}

\subsection{Outer product attention (OPA)\label{subsec:Outer-product-attention}}

Outer product attention (OPA) is a natural extension of the query-key-value
dot product attention \cite{vaswani2017attention}. Dot product attention
(DPA) for single query $q$ and $n_{kv}$ pairs of key-value can be
formulated as follows,

\begin{equation}
A^{\lyxmathsym{\textdegree}}\ensuremath{\left(q,K,V\right)}=\ensuremath{\sum_{i=1}^{n_{kv}}}\ensuremath{\mathcal{S}\left(q\cdot k_{i}\right)v_{i}}
\end{equation}
where $A^{\lyxmathsym{\textdegree}}\ensuremath{\in\mathbb{R}^{d_{v}}}$,
$q,k_{i}\in\mathbb{R}^{d_{qk}}$, $v_{i}\in\mathbb{R}^{d_{v}}$, $\cdot$
is dot product, and $\mathcal{\mathcal{S}}$ forms $\softmax$ function.
We propose a new outer product attention with similar formulation
yet different meaning,

\begin{equation}
A^{\otimes}\left(q,K,V\right)=\sum_{i=1}^{n_{kv}}\mathcal{F}\left(q\odot k_{i}\right)\otimes v_{i}
\end{equation}
where $A^{\otimes}\in\mathbb{R}^{d_{qk}\times d_{v}}$, $q,k_{i}\in\mathbb{R}^{d_{qk}}$,
$v\in\mathbb{R}^{d_{v}}$, $\odot$ is element-wise multiplication,
$\otimes$ is outer product and $\mathcal{F}$ is chosen as element-wise
$\tanh$ function. 

A crucial difference between DPA and OPA is that while the former
retrieves an attended item $A^{\lyxmathsym{\textdegree}}$, the latter
forms a relational representation $A^{\otimes}$. As a relational
representation, $A^{\otimes}$ captures all bit-level associations
between the key-scaled query and the value. This offers two benefits:
$\left(i\right)$ a higher-order representational capacity that DPA
cannot provide and $\left(ii\right)$ a form of associative memory
that can be later used to retrieve stored item by using a contraction
operation $\mathcal{P}\left(A^{\otimes}\right)$ (see Appendix $\mathsection$
\ref{subsec:OPA-and-SAM}-Prop. \ref{prop:opa_assoc}).

OPA is closely related to DPA. The relationship between the two for
simple $\mathcal{S}$ and $\mathcal{F}$ is presented as follows,
\begin{prop}
\label{prop:opa_g_dpa}Assume that $\mathcal{\mathcal{S}}$ is a linear
transformation: $\mathcal{S}\left(x\right)=ax+b$ ($a,b,x\in\mathbb{R}$),
we can extract $A^{\lyxmathsym{\textdegree}}$ from $A^{\otimes}$
by using an element-wise linear transformation $\mathcal{F}\left(x\right)=a^{f}\odot x+b^{f}$
($a^{f},b^{f},x\in\mathbb{R}^{d_{qk}}$) and a contraction $\mathcal{P}$:
$\mathbb{R}^{d_{qk}\times d_{v}}\rightarrow\mathbb{R}^{d_{v}}$ such
that

\begin{equation}
A^{\lyxmathsym{\textdegree}}\ensuremath{\left(q,K,V\right)}=\ensuremath{\mathcal{P}\left(A^{\otimes}\left(q,K,V\right)\right)}
\end{equation}
\end{prop}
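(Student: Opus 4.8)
The plan is to prove this by an explicit construction followed by a direct componentwise check: I would pick concrete parameters $a^{f},b^{f}$ for $\mathcal{F}$, pick a concrete contraction $\mathcal{P}$, and verify the identity coordinate by coordinate. The key observation is that, with the index convention $\left(u\otimes v\right)_{jl}=u_{j}v_{l}$, summing the rows of an outer product collapses it back to a scaled value vector: $\sum_{j}\left(u\otimes v\right)_{jl}=\left(\sum_{j}u_{j}\right)v_{l}$. So if I take $\mathcal{P}$ to be the linear contraction $\mathcal{P}\left(M\right)=M^{\top}\mathbf{1}$ (sum of the rows of $M$), then by linearity of $\mathcal{P}$,
\[
\mathcal{P}\left(A^{\otimes}\left(q,K,V\right)\right)=\sum_{i=1}^{n_{kv}}\left(\sum_{j=1}^{d_{qk}}\left[\mathcal{F}\left(q\odot k_{i}\right)\right]_{j}\right)v_{i}.
\]
It then remains only to choose $\mathcal{F}$ so that the inner scalar $\sum_{j}\left[\mathcal{F}\left(q\odot k_{i}\right)\right]_{j}$ equals $\mathcal{S}\left(q\cdot k_{i}\right)$ for every $i$.

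Since $\mathcal{S}\left(x\right)=ax+b$, I would set $a^{f}=a\,\mathbf{1}$ and $b^{f}=\left(b/d_{qk}\right)\mathbf{1}$, so that $\mathcal{F}\left(q\odot k_{i}\right)=a\left(q\odot k_{i}\right)+\left(b/d_{qk}\right)\mathbf{1}$ and hence $\sum_{j}\left[\mathcal{F}\left(q\odot k_{i}\right)\right]_{j}=a\sum_{j}q_{j}k_{i,j}+b=a\left(q\cdot k_{i}\right)+b=\mathcal{S}\left(q\cdot k_{i}\right)$. Substituting this back yields $\mathcal{P}\left(A^{\otimes}\right)=\sum_{i}\mathcal{S}\left(q\cdot k_{i}\right)v_{i}=A^{\circ}\left(q,K,V\right)$, which is exactly the claim. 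I would then remark that this choice is not unique: any $a^{f},b^{f}$ together with a weighted contraction $\mathcal{P}\left(M\right)=M^{\top}w$ works provided $w_{j}a_{j}^{f}=a$ for every $j$ and $\sum_{j}w_{j}b_{j}^{f}=b$, the weights $w$ merely rescaling each bit-level product $q_{j}k_{i,j}$ to a common coefficient.

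I do not expect a genuine obstacle here — the statement is in essence a matching-of-coefficients identity — but two points need care. The first is that the bias $b^{f}$ is essential: with $\mathcal{F}=\mathrm{id}$, or any linear $\mathcal{F}$ with $b^{f}=0$, the contraction can only recover $a\sum_{i}\left(q\cdot k_{i}\right)v_{i}$ and never the constant part $b\sum_{i}v_{i}$ of $A^{\circ}$, so the construction has to route that term through $b^{f}$, which is precisely what injects $\sum_{i}v_{i}$ into the memory matrix $A^{\otimes}$. The second is to make sure $\mathcal{P}$ as chosen is legitimately a contraction $\mathbb{R}^{d_{qk}\times d_{v}}\rightarrow\mathbb{R}^{d_{v}}$ (tensor contraction of the matrix against a fixed vector) and to fix the outer-product index convention up front so that ``sum over rows'' is the correct contraction; under the convention $\left(u\otimes v\right)_{jl}=u_{j}v_{l}$ used above this is immediate.
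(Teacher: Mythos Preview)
Your proposal is correct and follows essentially the same route as the paper: both choose the contraction $\mathcal{P}$ to be left-multiplication by the all-ones row vector (i.e., summing over the $d_{qk}$ index) and then pick $a^{f}_{j}=a$ for all $j$ together with $\sum_{j}b^{f}_{j}=b$ so that a componentwise expansion of both sides matches. Your specific choice $b^{f}_{j}=b/d_{qk}$ is one instance of the paper's condition $\sum_{j}b^{f}_{j}=b$, and your remark on non-uniqueness via a weighted contraction $M^{\top}w$ is a mild generalization the paper does not spell out.
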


\begin{proof}
see Appendix $\mathsection$ \ref{subsec:Relationship-between-OPA}.
\end{proof}
Moreover, when $n_{kv}=1$, applying a high dimensional transformation
$\mathcal{G}\left(A^{\otimes}\right)$ is equivalent to the well-known
bi-linear model (see Appendix $\mathsection$ \ref{subsec:Relationship-between-OPA-1}-Prop.
\ref{prop:opa_bilinear}). By introducing OPA, we obtain a new building
block that naturally supports both powerful relational bindings and
item memorization. 

\begin{figure*}
\begin{centering}
\includegraphics[width=0.9\textwidth]{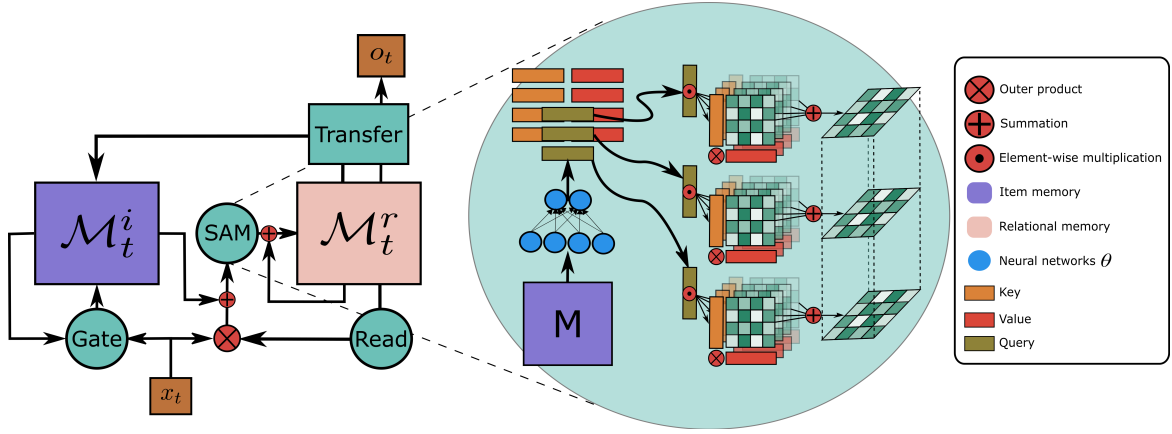}
\par\end{centering}
\caption{STM (left) and SAM (right). SAM uses neural networks $\theta$ to
extract query, key and value elements from a matrix memory $M$. In
this illustration, $n_{q}=3$ and $n_{kv}=4$. Then, it applies outer
product attention to output a $3D$ tensor relational representation.
In STM, at every timestep, the item memory $\mathcal{M}_{t}^{i}$
is updated with new input $x_{t}$ using gating mechanisms (Eq. \ref{eq:gate}).
The item memory plus the read-out from the relational memory is forwarded
to SAM, resulting in a new relational representation to update the
relational memory $\mathcal{M}_{t}^{r}$ (Eq. \ref{eq:Mr_proj}-\ref{eq:mr_update}).
The relational memory transfers its knowledge to the item memory (Eq.
\ref{eq:mr_transfer}) and output value (Eq. \ref{eq:mr_out}). \label{fig:stm}}
\end{figure*}

\subsection{Self-attentive Associative Memory (SAM)\label{subsec:SAM-operator}}

We introduce a novel and generic operator based upon OPA that constructs
relational representations from an item memory. The relational information
is extracted via preserving the outer products between any pairs of
items from the item memory. Hence, we name this operator Self-attentive
Associative Memory (SAM). Given an item memory $M\in\mathbb{R}^{n\times d}$
and parametric weights $\text{\ensuremath{\theta}=}\left\{ W_{q}\in\mathbb{R}^{n_{q}\times n}\right.$,
$W_{k}\in\mathbb{R}^{n_{kv}\times n}$, $\left.W_{v}\in\mathbb{R}^{n_{kv}\times n}\right\} $,
SAM retrieves $n_{q}$ queries, $n_{kv}$ keys and values from $M$
as $M_{q}$, $M_{k}$ and $M_{v}$, respectively,

\begin{equation}
M_{q}=\mathcal{LN}\left(W_{q}M\right)
\end{equation}
\begin{equation}
M_{k}=\mathcal{LN}\left(W_{k}M\right)
\end{equation}
\begin{equation}
M_{v}=\mathcal{LN}\left(W_{v}M\right)
\end{equation}
where $\mathcal{LN}$ is layer normalization operation \cite{ba2016layer}.
Then SAM returns a relational representation $\SAM_{\theta}\left(M\right)$$\in\mathbb{R}^{n_{q}\times d\times d}$,
in which the $s$-th element of the first dimension is defined as 

\begin{align}
\SAM_{\theta}\left(M\right)\left[s\right] & =A^{\otimes}\left(M_{q}\left[s\right],M_{k},M_{v}\right)\\
 & =\sum_{j=1}^{n_{kv}}\mathcal{F}\left(M_{q}\left[s\right]\odot M_{k}\left[j\right]\right)\otimes M_{v}\left[j\right]\label{eq:opsa}
\end{align}
where $s=1,...,n_{q}$. $M_{q}\left[s\right]$, $M_{k}\left[j\right]$
and $M_{v}\left[j\right]$ denote the $s$-th row vector of matrix
$M_{q}$, the $j$-th row vector of matrix $M_{k}$ and $M_{v}$,
respectively. A diagram illustrating SAM operations is given in Fig.
\ref{fig:stm} (right).

It should be noted that $M$ can be any item memory including the
slot-based memories \cite{le2018learning}, direct inputs \cite{vaswani2017attention}
or associative memories \cite{kohonen1972correlation,hopfield1982neural}.
We choose $M\in\mathbb{R}^{d\times d}$ as a form of classical associative
memory, which is biologically plausible \cite{marr1991theory}. Here,
we follow the traditional practice that sets $n=d$ for the associative
item memory. From $M$ we read query, key and value items to form
$\SAM_{\theta}\left(M\right)$--a new set of hetero-associative memories
using Eq. \ref{eq:opsa}. Each hetero-associative memory represents
the relationship between a query and all values. The role of the keys
is to maintain possible perfect retrieval for the item memory (Appendix
$\mathsection$ \ref{subsec:OPA-and-SAM}-Prop. \ref{prop:opa_assoc}). 

The high-order structure of SAM allows it to preserve bit-level relationships
between a query and a value in a matrix. SAM compresses several relationships
with regard to a query by summing all the matrices to form a hetero-associative
memory containing $d^{2}$ scalars, where $d$ is the dimension of
$M$. As there are $n_{kv}$ relationships given 1 query, the summation
results in on average $d^{2}/n_{kv}$ scalars of representation per
relationship, which is greater than $1$ if $d>\sqrt{n_{kv}}$. By
contrast, current self-attention mechanisms use dot product to measure
the relationship between any pair of memory slots, which means 1 scalar
per relationship. 

\subsection{SAM-based Two-Memory Model (STM)\label{subsec:SAM-based-Two-Memory-Model}}

To effectively utilize the SAM operator, we design a system which
consists of two memory units $\mathcal{M}_{t}^{i}\in\mathbb{R}^{d\times d}$
and $\mathcal{M}_{t}^{r}\in\mathbb{R}^{n_{q}\times d\times d}$: one
for items and the other for relationships, respectively. From a high-level
view, at each timestep, we use the current input data $x_{t}$ and
the previous state of memories $\left\{ \mathcal{M}_{t-1}^{i},\mathcal{M}_{t-1}^{r}\right\} $
to produce output $o_{t}$ and new state of memories $\left\{ \mathcal{M}_{t}^{i},\mathcal{M}_{t}^{r}\right\} $.
The memory executions are described as follows. 

\paragraph{$\mathcal{M}^{i}$-Write}

The item memory distributes the data from the input across its rows
in the form of associative memory. For an input $x_{t}$, we update
the item memory as
\begin{align}
X_{t} & =f_{1}\left(x_{t}\right)\otimes f_{2}\left(x_{t}\right)\nonumber \\
\mathcal{M}_{t}^{i} & =\mathcal{M}_{t-1}^{i}+X_{t}\label{eq:hebb_update}
\end{align}
where $f_{1}$ and $f_{2}$ are feed-forward neural networks that
output $d$-dimensional vectors. This update does not discriminate
the input data and inherits the low-capacity of classical associative
memory \cite{rojas2013neural}. We leverage the gating mechanisms
of LSTM \cite{hochreiter1997long} to improve Eq. \ref{eq:hebb_update}
as

\begin{equation}
\mathcal{M}_{t}^{i}=F_{t}\left(\mathcal{M}_{t-1}^{i},x_{t}\right)\odot\mathcal{M}_{t-1}^{i}+I_{t}\left(\mathcal{M}_{t-1}^{i},x_{t}\right)\odot X_{t}\label{eq:gate}
\end{equation}
where $F_{t}$ and $I_{t}$ are forget and input gates, respectively.
Detailed implementation of these gates is in Appendix $\mathsection$
\ref{subsec:Implementation-of-gate}. 

\paragraph{$\mathcal{M}^{r}$-Read }

As relationships stored in \textbf{$\mathcal{M}^{r}$} are represented
as associative memories, the relational memory can be read to reconstruct
previously seen items. As shown in Appendix $\mathsection$ \ref{subsec:OPA-and-SAM}-Prop.
\ref{prop:opa_assoc-1}, the read is basically a two-step contraction,

\begin{equation}
v_{t}^{r}=\softmax\left(f_{3}\left(x_{t}\right)^{\top}\right)\mathcal{M}_{t-1}^{r}f_{2}\left(x_{t}\right)\label{eq:Mr_proj}
\end{equation}
where $f_{3}$ is a feed-forward neural network that outputs a $n_{q}$-dimensional
vector. The read value provides an additional input coming from the
previous state of $\mathcal{M}^{r}$ to relational construction process,
as shown later in Eq. \ref{eq:mr_update}. 

\paragraph{$\mathcal{M}^{i}$-Read $\mathcal{M}^{r}$-Write}

We use SAM to read from $\mathcal{M}^{i}$ and construct a candidate
relational memory, which is simply added to the previous relational
memory to perform the relational update,

\begin{equation}
\mathcal{M}_{t}^{r}=\mathcal{M}_{t-1}^{r}+\alpha_{1}\SAM_{\theta}\left(\mathcal{M}_{t}^{i}+\alpha_{2}v_{t}^{r}\otimes f_{2}\left(x_{t}\right)\right)\label{eq:mr_update}
\end{equation}
where $\alpha_{1}$and $\alpha_{2}$ are blending hyper-parameters.
The input for SAM is a combination of the current item memory $\mathcal{M}_{t}^{i}$
and the association between the extracted item from the previous relational
memory $v_{t}^{r}$ and the current input data $x_{t}$. Here, $v_{t}^{r}$
enhances the relational memory with information from the distant past.
The resulting relational memory stores associations between several
pairs of items in a $3D$ tensors of size $n_{q}\times d\times d$.
In our SAM implementation, $n_{kv}=n_{q}$. 

\paragraph{$\mathcal{M}^{r}$-Transer}

In this phase, the relational knowledge from $\mathcal{M}_{t}^{r}$
is transferred to the item memory by using high dimensional transformation,

\begin{equation}
\mathcal{M}_{t}^{i}=\mathcal{M}_{t}^{i}+\alpha_{3}\mathcal{G}_{1}\circ\mathcal{V}_{f}\circ\mathcal{M}_{t}^{r}\label{eq:mr_transfer}
\end{equation}
where $\mathcal{V}_{f}$ is a function that flattens the first two
dimensions of its input tensor, $\mathcal{G}_{1}$ is a feed-forward
neural network that maps $\mathbb{R}^{\left(n_{q}d\right)\times d}\rightarrow\mathbb{R}^{d\times d}$
and $\alpha_{3}$ is a blending hyper-parameter. As shown in Appendix
$\mathsection$ \ref{subsec:Relationship-between-OPA-1}-Prop. \ref{prop:SAMbili},
with trivial $\mathcal{G}_{1}$, the transfer behaves as if the item
memory is enhanced with long-term stored values from the relational
memory. Hence, $\mathcal{M}^{r}$-Transfer is also helpful in supporting
long-term recall (empirical evidences in $\mathsection$ \ref{subsec:Ablation-study}).
In addition, at each timestep, we distill the relational memory into
an output vector $o_{t}\in\mathbb{R}^{n_{o}}$. We alternatively flatten
and apply high-dimensional transformations as follow,
\begin{equation}
o_{t}=\mathcal{G}_{3}\circ\mathcal{V}_{l}\circ\mathcal{G}_{2}\circ\mathcal{V}_{l}\circ\mathcal{M}_{t}^{r}\label{eq:mr_out}
\end{equation}
where $\mathcal{V}_{l}$ is a function that flattens the last two
dimensions of its input tensor. $\mathcal{G}_{2}$ and $\mathcal{G}_{3}$
are two feed-forward neural networks that map $\mathbb{R}^{n_{q}\times\left(dd\right)}\rightarrow\mathbb{R}^{n_{q}\times n_{r}}$
and $\mathbb{R}^{n_{q}n_{r}}\rightarrow\mathbb{R}^{n_{o}}$, respectively.
$n_{r}$ is a hyper-parameter. 

Unlike the contraction (Eq. \ref{eq:Mr_proj}), the distillation process
does not simply reconstruct the stored items. Rather, thanks to high-dimensional
transformations, it captures bi-linear representations stored in the
relational memory (proof in Appendix $\mathsection$ \ref{subsec:Relationship-between-OPA-1}).
Hence, despite its vector form, the output of our model holds a rich
representation that is useful for both sequential and relational learning.
We discuss further on how to quantify the degree of relational distillation
in Appendix $\mathsection$ \ref{subsec:Order-of-relationship}. The
summary of components of STM is presented in Fig. \ref{fig:stm} (left).

\section{Results}

\subsection{Ablation study\label{subsec:Ablation-study}}

We test different model configurations on two classical tasks for
sequential and relational learning: associative retrieval \cite{ba2016using}
and $N^{th}$-farthest \cite{santoro2018relational} (see Appendix
$\mathsection$ \ref{subsec:Learning-curves-abl} for task details
and learning curves). Our source code is available at \url{https://github.com/thaihungle/SAM}.

\paragraph{Associative retrieval}

This task measures the ability to recall a seen item given its associated
key and thus involves item memory. We use the setting with input sequence
length 30 and 50 \cite{Zhang2017LearningTU}. Three main factors affecting
the item memory of STM are the dimension $d$ of the auto-associative
item memory, the gating mechanisms (Eq. \ref{eq:gate}) and the relational
transfer (Eq. \ref{eq:mr_transfer}). Hence, we ablate our STM ($d=96$,
full features) by creating three other versions: small STM with transfer
($d=48$), small STM without transfer ($d=48$, w/o transfer) and
STM without gates ($d=96$, w/o gates). $n_{q}$ is fixed to $1$
as the task does not require much relational learning.

Table \ref{tab:abl1} reports the number of epochs required to converge
and the final testing accuracy. Without the proposed gating mechanism,
STM struggles to converge, which highlights the importance of extending
the capacity of the auto-associative item memory. The convergence
speed of STM is significantly improved with a bigger item memory size.
Relational transfer seems more useful for longer input sequences since
if requested, it can support long-term retrieval. Compared to other
fast-weight baselines, the full-feature STM performs far better as
it needs only 10 and 20 epochs to solve the tasks of length 30 and
50, respectively. 

\begin{table}
\begin{centering}
\begin{tabular}{ccccc}
\hline 
\multirow{2}{*}{Model} & \multicolumn{2}{c}{Length 30} & \multicolumn{2}{c}{Length 50}\tabularnewline
 & E. & A. & E. & A.\tabularnewline
\hline 
Fast weight$^{*}$  & 50 & 100 & 5000 & 20.8\tabularnewline
WeiNet$^{*}$  & 35 & 100 & 50 & 100\tabularnewline
\hline 
STM ($d=48$, w/o transfer) & 10 & 100 & 100 & 100\tabularnewline
STM ($d=48$) & 20 & 100 & 80 & 100\tabularnewline
STM ($d=96$, w/o gates) & 100 & 24 & 100 & 20\tabularnewline
STM ($d=96$) & \textbf{10} & 100 & \textbf{20} & 100\tabularnewline
\hline 
\end{tabular}
\par\end{centering}
\caption{Comparison of models on associative retrieval task with number of
epochs E. required to converge (lower is better) and convergence test
accuracy A. ($\%$, higher is better). $*$ is reported from \citet{Zhang2017LearningTU}.\label{tab:abl1}}

\end{table}

\paragraph{$\boldsymbol{N^{th}}$-farthest }

This task evaluates the ability to learn the relationship between
stored vectors. The goal is to find the $N^{th}$-farthest vector
from a query vector, which requires a relational memory for distances
between vectors and a sorting mechanism over the distances. For relational
reasoning tasks, the pivot is the number of extracted items $n_{q}$
for establishing the relational memory. Hence, we run our STM with
different $n_{q}=1,4,8$ using the same problem setting (8 $16$-
dimensional input vectors), optimizer (Adam), batch size (1600) as
in \citet{santoro2018relational}. We also run the task with TPR \cite{schlag2018learning}--a
high-order fast-weight model that is designed for reasoning. 

As reported in Table \ref{tab:abl2}, increasing $n_{q}$ gradually
improves the accuracy of STM. As there are 8 input vectors in this
task, literally, at each timestep the model needs to extract 8 items
to compute all pairs of distances. However, as the extracted item
is an entangled representation of all stored vectors and the temporarily
computed distances are stored in separate high-order storage, even
with $n_{q}=1,4$, STM achieves moderate results. With $n_{q}=8$,
STM nearly solves the task perfectly, outperforming RMC by a large
margin. We have tried to tune TPR for this task without success (see
Appendix $\mathsection$ \ref{subsec:Learning-curves-abl}). This
illustrates the challenge of training high-order neural networks in
diverse contexts.

\begin{table}
\begin{centering}
\begin{tabular}{cc}
\hline 
Model & Accuracy ($\%$)\tabularnewline
\hline 
DNC$^{*}$  & 25\tabularnewline
RMC$^{*}$  & 91\tabularnewline
TPR & 13\tabularnewline
\hline 
STM ($n_{q}=1$) & 84\tabularnewline
STM ($n_{q}=4$) & 95\tabularnewline
STM ($n_{q}=8$) & \textbf{98}\tabularnewline
\hline 
\end{tabular}
\par\end{centering}
\caption{Comparison of models on $N^{th}$-farthest task (test accuracy). $*$
is reported from \citet{santoro2018relational}.\label{tab:abl2}}
\end{table}

\subsection{Algorithmic synthetic tasks\label{subsec:Algorithmic-synthetic-tasks}}

\begin{figure*}
\begin{centering}
\includegraphics[width=0.8\textwidth]{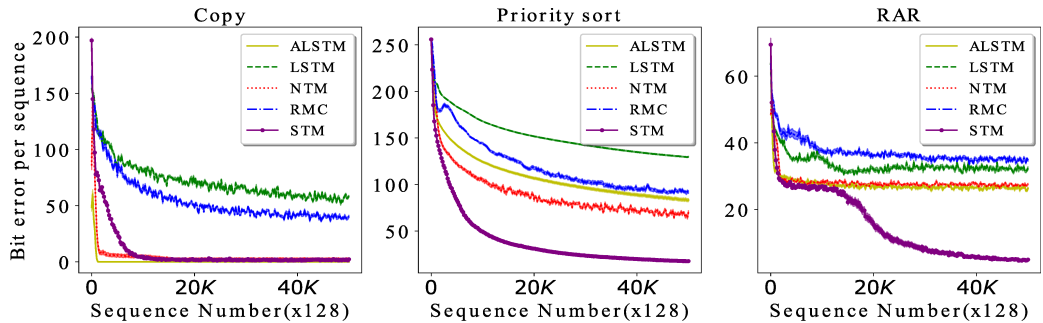}
\par\end{centering}
\caption{Bit error per sequence vs training iteration for algorithmic synthetic
tasks.\label{fig:Learning-curves-on-alg}}
\end{figure*}
Algorithmic synthetic tasks \cite{graves2014neural} examine sequential
models on memorization capacity (eg., Copy, Associative recall) and
simple relational reasoning (eg., Priority sort). Even without explicit
relational memory, MANNs have demonstrated good performance \cite{graves2014neural,Le2020Neural},
but they are verified for only low-dimensional input vectors (\textless 8
bits). As higher-dimensional inputs necessitate higher-fidelity memory
storage, we evaluate the high-fidelity reconstruction capacity of
sequential models for these algorithmic tasks with 32-bit input vectors. 

Two chosen algorithmic tasks are Copy and Priority sort. Item memory
is enough for Copy where the models just output the input vectors
seen in the same order in which they are presented. For Priority sort,
a relational operation that compares the priority of input vectors
is required to produce the seen input vectors in the sorted order
according to the priority score attached to each input vector. The
relationship is between input vectors and thus simply first-order
(see Appendix $\mathsection$ \ref{subsec:Order-of-relationship}
for more on the order of relationship). 

Inspired by Associative recall and $N^{th}$-farthest tasks, we create
a new task named Relational Associative Recall (RAR). In RAR, the
input sequence is a list of items followed by a query item. Each item
is a list of several 32-bit vectors and thus can be interpreted as
a concatenated long vector. The requirement is to reconstruct the
seen item that is farthest or closest (yet unequal) to the query.
The type of the relationship is conditioned on the last bit of the
query vector, i.e., if the last bit is 1, the target is the farthest
and 0 the closest. The evaluated models must compute the distances
from the query item to any other seen items and then compare the distances
to find the farthest/closest one. Hence, this task is similar to the
$N^{th}$-farthest task, which is second-order relational and thus
needs relational memory. However, this task is more challenging since
the models must reconstruct the seen items (32-bit vectors). Compared
to $N=8$ possible one-hot outputs in $N^{th}$-farthest, the output
space in RAR is $2^{32}$ per step, thereby requiring high-fidelity
item memory.

We evaluate our model STM ($n_{q}=8$, $d=96$) with the 4 following
baselines: LSTM \cite{hochreiter1997long}, attentional LSTM \cite{bahdanau2015neural},
NTM \cite{graves2014neural} and RMC \cite{santoro2018relational}.
Details of the implementation are listed in Appendix $\mathsection$
\ref{subsec:Implementation-of-baselines-alg}. The learning curves
(mean and error bar over 5 runs) are presented in Fig. \ref{fig:Learning-curves-on-alg}.

LSTM is often the worst performer as it is based on vector memory.
ALSTM is especially good for Copy as it has a privilege to access
input vectors at every step of decoding. However, when dealing with
relational reasoning, memory-less attention in ALSTM does not help
much. NTM performs well on Copy and moderately on Priority sort, yet
badly on RAR possibly due to its bias towards item memory. Although
equipped with self-attention relational memory, RMC demonstrates trivial
performance on all tasks. This suggests a limitation of using dot-product
attention to represent relationships when the tasks stress memorization
or the relational complexity goes beyond dot-product capacity. Amongst
all models, only the proposed STM demonstrates consistently good performance
where it almost achieves zero errors on these 3 tasks. Notably, for
RAR, only STM can surpass the bottleneck error of 30 bits and reach
$\approx1$ bit error, corresponding to 0\% and 87\% of items perfectly
reconstructed, respectively.

\begin{table*}
\begin{centering}
\begin{tabular}{cccccccc}
\hline 
\multirow{2}{*}{Model} & \multirow{2}{*}{\#Parameters} & \multicolumn{2}{c}{Convex hull} & \multicolumn{2}{c}{TSP} & Shortest  & Minimum \tabularnewline
 &  & $N=5$ & $N=10$ & $N=5$ & $N=10$ & path & spanning tree\tabularnewline
\hline 
LSTM & 4.5 M  & 89.15 & 82.24 & 73.15 (\emph{2.06}) & 62.13 (\emph{3.19}) & 72.38 & 80.11\tabularnewline
ALSTM & 3.7 M & 89.92 & 85.22 & 71.79 (\emph{2.05}) & 55.51 (\emph{3.21}) & 76.70 & 73.40\tabularnewline
DNC & 1.9 M  & 89.42 & 79.47 & 73.24 (\emph{2.05}) & 61.53 (\emph{3.17}) & 83.59 & 82.24\tabularnewline
RMC & 2.8 M & 93.72 & 81.23 & 72.83 (\emph{2.05}) & 37.93 (\emph{3.79}) & 66.71 & 74.98\tabularnewline
\hline 
STM & 1.9 M  & \textbf{96.85} & \textbf{91.88} & \textbf{73.96 (}\textbf{\emph{2.05}}\textbf{)} & \textbf{69.43 (}\textbf{\emph{3.03}}\textbf{)} & \textbf{93.43} & \textbf{94.77}\tabularnewline
\hline 
\end{tabular}
\par\end{centering}
\caption{Prediction accuracy ($\%$) for geometric and graph reasoning with
random one-hot features. Italic numbers are tour length--additional
metric for TSP. Average optimal tour lengths found by brute-force
search for $N=5$ and $10$ are 2.05 and 2.88, respectively.\label{tab:Point-prediction-accuracy}}
\end{table*}
\begin{figure*}
\begin{centering}
\includegraphics[width=1\textwidth]{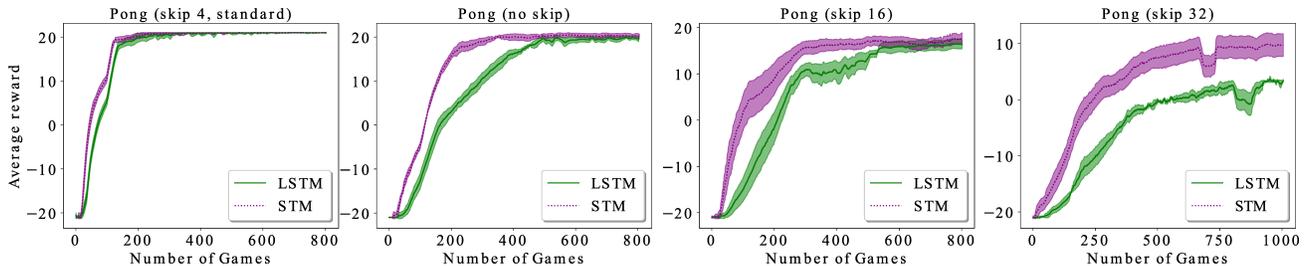}
\par\end{centering}
\caption{Average reward vs number of games for reinforcement learning task
in n-frame skip settings.\label{fig:Learning-curves-on-rl}}
\end{figure*}

\subsection{Geometric and graph reasoning\label{subsec:Geometry-and-graph}}

Problems on geometry and graphs are a good testbed for relational
reasoning, where geometry stipulates spatial relationships between
points, and graphs the relational structure of nodes and edges. Classical
problems include Convex hull, Traveling salesman problem (TSP) for
geometry, and Shortest path, Minimum spanning tree for graph. Convex
hull and TSP data are from \citet{vinyals2015pointer} where input
sequence is a list of points' coordinates (number of points $N\sim\left[5,20\right]$).
Graphs in Shortest path and Minimum spanning tree are generated with
solutions found by Dijkstra and Kruskal algorithms, respectively.
A graph input is represented as a sequence of triplets $\left(node_{1},node_{2},edge_{12}\right)$.
The desired output is a sequence of associated features of the solution
points/nodes (more in Appendix $\mathsection$ \ref{subsec:Geometry-and-graph-1}). 

We generate a random one-hot associated feature for each point/node,
which is stacked into the input vector. This allows us to output the
node's associated features. This is unlike \citet{vinyals2015pointer},
who just outputs the pointers to the nodes. Our modification creates
a challenge for both training and testing. The training is more complex
as the feature of the nodes varies even for the same graph. The testing
is challenging as the associated features are likely to be different
from that in the training. A correct prediction for a timestep is
made when the predicted feature matches perfectly with the ground
truth feature in the timestep. To measure the performance, we use
the average accuracy of prediction across steps. We use the same baselines
as in $\mathsection$ \ref{subsec:Algorithmic-synthetic-tasks} except
that we replace NTM with DNC as DNC performs better on graph reasoning
\cite{graves2016hybrid}. 

We report the best performance of the models on the testing datasets
in Table \ref{tab:Point-prediction-accuracy}. Although our STM has
fewest parameters, it consistently outperforms other baselines by
a significant margin. As usual, LSTM demonstrates an average performance
across tasks. RMC and ALSTM are only good at Convex hull. DNC performs
better on graph-like problems such as Shortest path and Minimum spanning
tree. For the NP-hard TSP ($N=5$), despite moderate point accuracy,
all models achieve nearly minimal solutions with an average tour length
of $2.05$. When increasing the difficulty with more points ($N=10$),
none of these models reach an average optimal tour length of $2.88$.
However, only STM approaches closer to the optimal solution without
the need for pointer and beam search mechanisms. Armed with both item
and relational memory, STM's superior performance suggests a qualitative
difference in the way STM and other methods solve these problems.

\subsection{Reinforcement learning}

Memory is helpful for partially observable Markov decision process
\cite{bakker2002reinforcement}. We apply our memory to LSTM agents
in Atari game environment using A3C training \cite{mnih2016asynchronous}.
More details are given in Appendix $\mathsection$ \ref{subsec:Reinforcement-learning-task}.
In Atari games, each state is represented as the visual features of
a video frame and thus is partially observable. To perform well, RL
agents should remember and relate several frames to model the game
state comprehensively. These abilities are challenged when over-sampling
and under-sampling the observation, respectively. We analyze the performance
of LSTM agents and their STM-augmented counterparts under these settings
using a game: Pong. 

To be specific, we test the two agents on different frame skips (0,
4, 16, 32). We create $n$-frame skip setting by allowing the agent
to see the environment only after every $n$ frames, where 4-frame
skip is standard in most Atari environments. When no frameskip is
applied (over-sampling), the number of observations is dense and the
game is long (up to 9000 steps per game), which requires high-capacity
item memory. On the contrary, when a lot of frames are skipped (under-sampling),
the observations become scarce and the agents must model the connection
between frames meticulously, demanding better relational memory. 

We run each configuration 5 times and report the mean and error bar
of moving average reward (window size $=100$) through training time
in Fig. \ref{fig:Learning-curves-on-rl}. In a standard condition
(4-frame skip), both baselines can achieve perfect performance and
STM outperforms LSTM slightly in terms of convergence speed. The performance
gain becomes clearer under extreme conditions with over-sampling and
under-sampling. STM agents require fewer practices to accomplish higher
rewards, especially in the 32-frame skip environment, which illustrates
that having strong item and relational memory in a single model is
beneficial to RL agents. 

\subsection{Question answering }

bAbI is a question answering dataset that evaluates the ability to
remember and reason on textual information \cite{weston2015towards}.
Although synthetically generated, the dataset contains 20 challenging
tasks such as pathfinding and basic induction, which possibly require
both item and relational memory. Following \citet{schlag2018learning},
each story is preprocessed into a sentence-level sequence, which is
fed into our STM as the input sequence. We jointly train STM for all
tasks using normal supervised training (more in Appendix $\mathsection$
\ref{subsec:babi_task}). We compare our model with recent memory
networks and report the results in Table \ref{tab:Results-for-bAbI}. 

MANNs such as DNC and NUTM have strong item memory, yet do not explicitly
support relational learning, leading to significantly higher errors
compared to other models. On the contrary, TPR is explicitly equipped
with relational bindings but lack of item memory and thus clearly
underperforms our STM. Universal Transformer (UT) supports a manually
set item memory with dot product attention, showing higher mean error
than STM with learned item memory and outer product attention. Moreover,
our STM using normal supervised loss outperforms MNM-p trained with
meta-level loss, establishing new state-of-the-arts on bAbI dataset.
Notably, STM achieves this result with low variance, solving 20 tasks
for 9/10 run (see Appendix $\mathsection$ \ref{subsec:babi_task}). 

\begin{table}
\begin{centering}
\begin{tabular}{lcc}
\hline 
\multirow{2}{*}{{\small{}Model}} & \multicolumn{2}{c}{{\small{}Error}}\tabularnewline
 & {\small{}Mean } & {\small{}Best}\tabularnewline
\hline 
{\small{}DNC \cite{graves2016hybrid}} & {\small{}12.8 $\pm$ 4.7} & {\small{}3.8}\tabularnewline
{\small{}NUTM \cite{Le2020Neural} } & {\small{}5.6 $\pm$ 1.9} & {\small{}3.3}\tabularnewline
{\small{}TPR \cite{schlag2018learning}} & {\small{}1.34 $\pm$ 0.52 } & {\small{}0.81}\tabularnewline
{\small{}UT \cite{dehghani2018universal}} & {\small{}1.12 $\pm$ 1.62} & {\small{}0.21}\tabularnewline
{\small{}MNM-p \cite{munkhdalai2019metalearned}} & {\small{}0.55 $\pm$ 0.74} & {\small{}0.18}\tabularnewline
\hline 
{\small{}STM } & \textbf{\small{}0.39 $\pm$ 0.18} & \textbf{\small{}0.15}\tabularnewline
\hline 
\end{tabular}
\par\end{centering}
\caption{bAbI task: mean $\pm$ std. and best error over 10 runs.\label{tab:Results-for-bAbI}}
\end{table}

\section{Related Work}

\paragraph{Background on associative memory}

Associative memory is a classical concept to model memory in the brain
\cite{marr1991theory}. While outer product is one common way to form
the associative memory, different models employ different memory retrieval
mechanisms. For example, Correlation Matrix Memory (CMM) and Hopfield
network use dot product and recurrent networks, respectively \cite{kohonen1972correlation,hopfield1982neural}.
The distinction between our model and other associative memories lies
in the fact that our model's association comes from several pieces
of the memory itself rather than the input data. Also, unlike other
two-memory systems \cite{Le:2018:DMN:3219819.3219981,Le2020Neural}
that simulate data/program memory in computer architecture, our STM
resembles item and relational memory in human cognition.

\paragraph{Background on attention}

Attention is a mechanism that allows interactions between a query
and a set of stored keys/values \cite{graves2014neural,bahdanau2014neural}.
Self-attention mechanism allows stored items to interact with each
other either in forms of feed-forward \cite{vaswani2017attention}
or recurrent \cite{santoro2018relational,le2018learning} networks.
Modeling memory interactions can also be achieved via attention over
a set of parallel RNNs \cite{henaff2016tracking}. Although some form
of relational memory can be kept in these approaches, they all use
dot product attention to measure interactions per attention head as
a scalar, and thus loose much relational information. We use outer
product to represent the interactions as a matrix and thus our outer
product self-attention is supposed to be richer than the current self-attention
mechanisms (Prop. \ref{prop:opa_g_dpa}).

\paragraph{SAM as fast-weight}

Outer product represents Hebbian learning--a fast learning rule that
can be used to build fast-weights \cite{cogprints1380}. As the name
implies, fast-weights update whenever an input is introduced to the
network and stores the input pattern temporarily for sequential processing
\cite{ba2016using}. Meta-trained fast-weights \cite{munkhdalai2019metalearned}
and gating of fast-weights \cite{schlag2017gated,Zhang2017LearningTU}
are introduced to improve memory capacity. Unlike these fast-weight
approaches, our model is not built on top of other RNNs. Recurrency
is naturally supported within STM.

The tensor product representation (TPR), which is a form of high-order
fast-weight, can be designed for structural reasoning \cite{smolensky1990tensor}.
In a recent work \cite{schlag2018learning}, a third-order TPR resembles
our relational memory $\mathcal{M}_{t}^{r}$ where both are $3D$
tensors. However, TPR does not enable interactions amongst stored
patterns through self-attention mechanism. The meaning of each dimension
of the TPR is not related to that of $\mathcal{M}_{t}^{r}$. More
importantly, TPR is restricted to question answering task. 

\paragraph{SAM as bi-linear model}

Bi-linear pooling produces output from two input vectors by considering
all pairwise bit interactions and thus can be implemented by means
of outer product \cite{tenenbaum2000separating}. To reduce computation
cost, either low-rank factorization \cite{yu2017multi} or outer product
approximation \cite{pham2013fast} is used. These approaches aim to
enrich feed-forward layers with bi-linear poolings yet have not focused
on maintaining a rich memory of relationships. 

Low-rank bi-linear pooling is extended to perform visual attentions
\cite{kim2018bilinear}. It results in different formulation from
our outer product attention, which is equivalent to full rank bi-linear
pooling ($\mathsection$ \ref{subsec:Outer-product-attention}). These
methods are designed for static visual question answering while our
approach is used to maintain a relational memory over time, which
can be applied to any sequential problem.

\section{Conclusions}

We have introduced the SAM-based Two-memory Model (STM) that implements
both item and relational memory. To wire up the two memory system,
we employ a novel operator named Self-attentive Associative Memory
(SAM) that constructs the relational memory from outer-product relationships
between arbitrary pieces of the item memory. We apply read, write
and transfer operators to access, update and distill the knowledge
from the two memories. The ability to remember items and their relationships
of the proposed STM is validated through a suite of diverse tasks
including associative retrieval, $N^{th}$-farthest, vector algorithms,
geometric and graph reasoning, reinforcement learning and question
answering. In all scenarios, our model demonstrates strong performance,
confirming the usefulness of having both item and relational memory
in one model.

\subsubsection*{ACKNOWLEDGMENTS}

This research was partially funded by the Australian Government through
the Australian Research Council (ARC). Prof Venkatesh is the recipient
of an ARC Australian Laureate Fellowship (FL170100006).

\bibliographystyle{icml2020}
\bibliography{sam}

\cleardoublepage\newpage{}

\section*{Appendix}

\renewcommand\thesubsection{\Alph{subsection}}

\subsection{Relationship between OPA and DPA\label{subsec:Relationship-between-OPA}}

\begin{table*}
\begin{centering}
\begin{tabular}{cccc}
\hline 
Model & Addition complexity & Multiplication complexity & Physical storage for relationships\tabularnewline
\hline 
DPA & $O\left(\left(d_{qk}n_{q}+d_{v}\right)n_{kv}\right)$ & $O\left(\left(d_{qk}+d_{v}\right)n_{q}n_{kv}\right)$ & $O\left(n_{q}n_{kv}\right)$\tabularnewline
OPA & $O\left(n_{q}n_{kv}d_{qk}d_{v}\right)$ & $O\left(n_{q}d_{qk}d_{v}\right)$ & $O\left(n_{q}d_{qk}d_{v}\right)$\tabularnewline
\hline 
\end{tabular}
\par\end{centering}
\caption{Computational complexity of DPA and OPA with $n_{q}$ queries and
$n_{kv}$ key-value pairs. $d_{qk}$ denotes query or key size, while
$d_{v}$ value size. \label{tab:O_opa}}
\end{table*}
\begin{table}
\begin{centering}
\begin{tabular}{cc}
\hline 
Model & Wall-clock time (second)\tabularnewline
\hline 
LSTM & 0.1\tabularnewline
NTM & 1.8\tabularnewline
RMC & 0.3\tabularnewline
\hline 
STM & 0.3\tabularnewline
\hline 
\end{tabular}
\par\end{centering}
\caption{Wall-clock time to process a batch of data on Priority Sort task.
The batch size is 128. All models are implemented using Pytorch, have
around 1 million parameters and run on the same machine with Tesla
V100-SXM2 GPU. \label{tab:wctime}}
\end{table}
\begin{lem}
\label{lem:induct}For $\forall n_{i},n_{j}\in\mathbb{N^{+}}$, 

\begin{equation}
\sum_{i=1}^{n_{i}}\sum_{j=1}^{n_{j}}q_{j}k_{ij}v_{i}=\sum_{j=1}^{n_{j}}\sum_{i=1}^{n_{i}}q_{j}k_{ij}v_{i}\label{eq:lem1_eq}
\end{equation}
where $q_{j},k_{ij},v_{i}\in\mathbb{R}$.
\end{lem}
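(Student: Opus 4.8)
The statement is simply that a finite double sum of real numbers may be reordered, i.e. Fubini's theorem for finite sums, specialized to the summand $q_j k_{ij} v_i$. Since everything is a finite sum over $\mathbb{N}^+$ and the terms are scalars, absolute convergence is automatic and the reordering is valid. The plan is to give a clean inductive argument so that the paper can invoke it as a self-contained building block (it will presumably be used to commute the two summations when relating OPA to DPA in Proposition~\ref{prop:opa_g_dpa}).

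First I would fix $n_j$ arbitrary and induct on $n_i$. The base case $n_i = 1$ is immediate: both sides equal $\sum_{j=1}^{n_j} q_j k_{1j} v_1$, since the outer ``$\sum_{i=1}^{1}$'' is trivial on each side. For the inductive step, assume the identity holds for $n_i$; then on the left-hand side split off the $(n_i+1)$-th term of the outer sum,
\begin{equation}
\sum_{i=1}^{n_i+1}\sum_{j=1}^{n_j} q_j k_{ij} v_i = \sum_{i=1}^{n_i}\sum_{j=1}^{n_j} q_j k_{ij} v_i + \sum_{j=1}^{n_j} q_j k_{(n_i+1)j} v_{n_i+1}.
\end{equation}
Apply the induction hypothesis to the first term to rewrite it as $\sum_{j=1}^{n_j}\sum_{i=1}^{n_i} q_j k_{ij} v_i$, and then use the fact that, for each fixed $j$, $\sum_{i=1}^{n_i} q_j k_{ij} v_i + q_j k_{(n_i+1)j} v_{n_i+1} = \sum_{i=1}^{n_i+1} q_j k_{ij} v_i$ (finite additivity of a single sum of reals). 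Combining the $j$-sums gives $\sum_{j=1}^{n_j}\sum_{i=1}^{n_i+1} q_j k_{ij} v_i$, which is the right-hand side for $n_i+1$. This closes the induction.

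There is no real obstacle here — the only thing to be careful about is bookkeeping the index shift cleanly and invoking associativity/commutativity of real addition only at the level of a single (inner) finite sum, so that the induction genuinely reduces the double-sum claim to one-dimensional facts. If one prefers to avoid induction entirely, an alternative is to note that both sides equal the single sum $\sum_{(i,j)\in\{1,\dots,n_i\}\times\{1,\dots,n_j\}} q_j k_{ij} v_i$ over the product index set, which is well-defined independently of enumeration order; but the inductive phrasing is the most elementary and self-contained, so that is the route I would write up.
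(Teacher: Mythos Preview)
Your proof is correct and essentially the same as the paper's: both argue by induction, with the only difference that the paper inducts on $n_j$ (peeling off the last term of the inner sum) while you induct on $n_i$ (peeling off the last term of the outer sum). This is a purely cosmetic symmetry, and your write-up is if anything slightly cleaner in isolating where associativity/commutativity of real addition is used.
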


\begin{proof}
We will prove by induction for all $n_{j}\in\mathbb{N^{+}}$. 

Base case: when $n_{j}=1$, the $LHS=RHS=\sum_{i}^{n_{i}}q_{1}k_{i1}v_{i}$.
Let $t\in\mathbb{N}^{+}$ be given and suppose Eq. \ref{eq:lem1_eq}
is true for $n_{j}=t$. Then

\begin{align*}
\sum_{i=1}^{n_{i}}\sum_{j=1}^{t+1}q_{j}k_{ij}v_{i} & =\sum_{i=1}^{n_{i}}\left(q_{t+1}k_{it+1}v_{i}+\sum_{j=1}^{t}q_{j}k_{ij}v_{i}\right)\\
 & =\sum_{i=1}^{n_{i}}q_{t+1}k_{it+1}v_{i}+\sum_{i=1}^{n_{i}}\sum_{j=1}^{t}q_{j}k_{ij}v_{i}\\
 & =\sum_{i=1}^{n_{i}}q_{t+1}k_{it+1}v_{i}+\sum_{j=1}^{t}\sum_{i=1}^{n_{i}}q_{j}k_{ij}v_{i}\\
 & =\sum_{j=1}^{t+1}\sum_{i=1}^{n_{i}}q_{j}k_{ij}v_{i}
\end{align*}
Thus, Eq. \ref{eq:lem1_eq} holds for $n_{j}=t+1$ and $\forall n_{j}\in\mathbb{N^{+}}$
by the principle of induction. 
\end{proof}
\begin{prop}
\label{prop:opa_g_dpa1}Assume that $\mathcal{\mathcal{S}}$ is a
linear transformation: $\mathcal{\mathcal{\mathcal{S}}}\left(x\right)=ax+b$
($a,b,x\in\mathbb{R}$), we can extract $A^{\lyxmathsym{\textdegree}}$
from $A^{\otimes}$ by using an element-wise linear transformation
$\mathcal{F}\left(x\right)=a^{f}\odot x+b^{f}$ ($a^{f},b^{f},x\in\mathbb{R}^{d_{qk}}$)
and a contraction $\mathcal{P}$: $\mathbb{R}^{d_{qk}\times d_{v}}\rightarrow\mathbb{R}^{d_{v}}$
such that

\begin{equation}
A^{\lyxmathsym{\textdegree}}\ensuremath{\left(q,K,V\right)}=\ensuremath{\mathcal{P}\left(A^{\otimes}\left(q,K,V\right)\right)}
\end{equation}
where
\end{prop}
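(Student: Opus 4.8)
The plan is to reverse-engineer the contraction $\mathcal{P}$ from the algebraic shape of $A^{\otimes}$. First I would write out $A^{\otimes}\left(q,K,V\right)$ componentwise: its $(m,\ell)$ entry is $\sum_{i=1}^{n_{kv}}\left(a^{f}_{m}q_{m}k_{im}+b^{f}_{m}\right)v_{i\ell}$, where $q_{m},k_{im}$ are the $m$-th coordinates of $q,k_i$ and $v_{i\ell}$ the $\ell$-th coordinate of $v_i$. In parallel I would write out the target: the $\ell$-th coordinate of $A^{\lyxmathsym{\textdegree}}\left(q,K,V\right)$ is $\sum_{i=1}^{n_{kv}}\left(a\,(q\cdot k_i)+b\right)v_{i\ell}=\sum_{i=1}^{n_{kv}}\left(a\sum_{m=1}^{d_{qk}}q_m k_{im}+b\right)v_{i\ell}$. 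The goal is to choose $a^{f},b^{f}$ and a linear contraction $\mathcal{P}$ so these two expressions agree for every $\ell$.

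The natural guess is to take $\mathcal{P}$ to be summation over the first index: $\mathcal{P}(T)_\ell=\sum_{m=1}^{d_{qk}}T_{m\ell}$ (possibly rescaled). Applying this to $A^{\otimes}$ gives $\sum_{m=1}^{d_{qk}}\sum_{i=1}^{n_{kv}}\left(a^{f}_{m}q_m k_{im}+b^{f}_{m}\right)v_{i\ell}$. Swapping the order of the finite sums — which is exactly what Lemma \ref{lem:induct} licenses, applied coordinatewise in $\ell$ with $q_j\leftrightarrow$ the appropriate scalars — this equals $\sum_{i=1}^{n_{kv}}\left(\sum_{m=1}^{d_{qk}}a^{f}_{m}q_m k_{im}+\sum_{m=1}^{d_{qk}}b^{f}_{m}\right)v_{i\ell}$. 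Matching against the target term by term in $i$, I would set $a^{f}_{m}=a$ for all $m$ (so $\sum_m a^{f}_m q_m k_{im}=a\,q\cdot k_i$) and $\sum_{m=1}^{d_{qk}}b^{f}_{m}=b$, e.g. $b^{f}_{m}=b/d_{qk}$. Then the two expressions coincide for every $\ell$, which is the claimed identity.

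So the key steps, in order, are: (1) expand both sides in coordinates; (2) propose $\mathcal{P}=$ (coordinatewise) sum over the first tensor index; (3) invoke Lemma \ref{lem:induct} to interchange the $\sum_m$ and $\sum_i$; (4) read off the constraints $a^{f}_{m}\equiv a$ and $\sum_m b^{f}_{m}=b$ and exhibit an explicit choice; (5) conclude equality. I expect the only mild subtlety — not really an obstacle — is bookkeeping: $\mathcal{F}$ acts elementwise with a vector slope/intercept while $\mathcal{S}$ has scalar slope/intercept, so one must be careful that the degrees of freedom line up (they do: the $b^{f}_m$ are underdetermined, which is harmless, and the $a^{f}_m$ are forced to be constant). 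The interchange-of-summation step is where Lemma \ref{lem:induct} does the real work, so I would make sure to state it is applied with $n_i=n_{kv}$, $n_j=d_{qk}$, and the scalars instantiated as $q_j\mapsto q_m$, $k_{ij}\mapsto k_{im}$, $v_i\mapsto v_{i\ell}$ for each fixed output coordinate $\ell$.
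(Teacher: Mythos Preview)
Your proposal is correct and follows essentially the same approach as the paper: expand both $A^{\lyxmathsym{\textdegree}}$ and $A^{\otimes}$ in coordinates, take $\mathcal{P}$ to be the sum over the first tensor index (the paper writes this as left-multiplication by the all-ones row vector), invoke Lemma~\ref{lem:induct} to swap the $\sum_m$ and $\sum_i$, and then match coefficients by setting $a^{f}_{m}=a$ and $\sum_m b^{f}_{m}=b$. The only cosmetic difference is that you exhibit the explicit choice $b^{f}_{m}=b/d_{qk}$, whereas the paper simply asserts such a choice exists.
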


\begin{equation}
A^{\lyxmathsym{\textdegree}}\ensuremath{\left(q,K,V\right)}=\ensuremath{\sum_{i=1}^{n_{kv}}}\ensuremath{\mathcal{S}\left(q\cdot k_{i}\right)v_{i}}
\end{equation}

\begin{equation}
A^{\otimes}\left(q,K,V\right)=\sum_{i=1}^{n_{kv}}\mathcal{F}\left(q\odot k_{i}\right)\otimes v_{i}
\end{equation}

\begin{proof}
We derive the LHS. Let $u_{i}$ denote the scalar $\mathcal{S}\left(q\cdot k_{i}\right)$,
then

\begin{align*}
u_{i} & =\mathcal{S}\left(q\cdot k_{i}\right)=\mathcal{S}\left(\sum_{j=1}^{d_{qk}}q_{j}k_{ij}\right)\\
 & =\sum_{j=1}^{d_{qk}}aq_{j}k_{ij}+b
\end{align*}
where $q_{j}$ and $k_{ij}$ are the $j$-th elements of vector $q$
and $k_{i}$, respectively. Let $l\in\mathbb{R}^{d_{v}}$ denote the
vector $A^{\lyxmathsym{\textdegree}}\ensuremath{\left(q,K,V\right)}=\ensuremath{\sum_{i=1}^{n_{kv}}u_{i}v_{i}}$,
then the $t$-th element of $l$ is 

\begin{align}
l_{t} & =\sum_{i=1}^{n_{kv}}u_{i}v_{it}\nonumber \\
 & =\sum_{i=1}^{n_{kv}}\left(\sum_{j=1}^{d_{qk}}aq_{j}k_{ij}+b\right)v_{it}\nonumber \\
 & =\sum_{i=1}^{n_{kv}}\sum_{j=1}^{d_{qk}}aq_{j}k_{ij}v_{it}+b\sum_{i=1}^{n_{kv}}v_{it}\nonumber \\
 & =a\sum_{i=1}^{n_{kv}}\sum_{j=1}^{d_{qk}}q_{j}k_{ij}v_{it}+b\sum_{i=1}^{n_{kv}}v_{it}
\end{align}
We derive the RHS. Let $d_{i}$ denote the vector $\mathcal{F}\left(q\odot k_{i}\right)$,
then the $j$-th element of $d_{i}$ is 

\begin{align}
d_{ij} & =\mathcal{F}\left(q_{j}k_{ij}\right)\nonumber \\
 & =a_{j}^{f}q_{j}k_{ij}+b_{j}^{f}
\end{align}
Let $e\in\mathbb{R}^{d_{qk}\times d_{v}}$ denote the matrix $A^{\otimes}\left(q,K,V\right)=\sum_{i=1}^{n_{kv}}d_{i}\otimes v_{i}$,
then the $j$-th row, $t$-column element of $e$ is

\begin{align}
e_{jt} & =\sum_{i=1}^{n_{kv}}d_{ij}v_{it}\nonumber \\
 & =\sum_{i=1}^{n_{kv}}\left(a_{j}^{f}q_{j}k_{ij}+b_{j}^{f}\right)v_{it}\nonumber \\
 & =\sum_{i=1}^{n_{kv}}a_{j}^{f}q_{j}k_{ij}v_{it}+b_{j}^{f}\sum_{i=1}^{n_{kv}}v_{it}
\end{align}
Let $r\in\mathbb{R}^{d_{v}}$ denote the vector $\sum_{j=1}^{d_{qk}}e_{j}$,
then the $t$-th element of $r$ is 

\begin{align}
r_{t} & =\sum_{j=1}^{d_{qk}}e_{jt}\nonumber \\
 & =\sum_{j=1}^{d_{qk}}\left(\sum_{i=1}^{n_{kv}}a_{j}^{f}q_{j}k_{ij}v_{it}+b_{j}^{f}\sum_{i=1}^{n_{kv}}v_{it}\right)\nonumber \\
 & =\sum_{j=1}^{d_{qk}}\sum_{i=1}^{n_{kv}}a_{j}^{f}q_{j}k_{ij}v_{it}+\sum_{j=1}^{d_{qk}}b_{j}^{f}\sum_{i=1}^{n_{kv}}v_{it}\label{eq:lem1_rhs}
\end{align}
We can always choose $a_{j}^{f}=a$ and $\sum_{j=1}^{d_{qk}}b_{j}^{f}=b$.
Eq. \ref{eq:lem1_rhs} becomes,

\[
r_{t}=a\sum_{j=1}^{d_{qk}}\sum_{i=1}^{n_{kv}}q_{j}k_{ij}v_{it}+b\sum_{i}^{n_{kv}}v_{it}
\]
According to Lemma \ref{lem:induct}, $l_{t}=r_{t}$ $\forall d_{qk},n_{kv}\in\mathbb{N^{+}}\Rightarrow l=r$.
Also, $\exists\mathcal{P}$ as a contraction: $\mathcal{P}\left(X\right)=a_{p}X$
with $a_{p}=\left[1,...,1\right]\in\mathbb{R}^{1\times d_{qk}}$.
\end{proof}

We compare the complexity of DPA and OPA in Table \ref{tab:O_opa}.
In general, compared to that of DPA, OPA's complexity is increased
by an order of magnitude, which is equivalent to the size of the patterns.
In practice, we keep that value small (96) to make the training efficient.
That said, due to its high-order nature, our memory model still maintains
enormous memory space. In terms of speed, STM's running time is almost
the same as RMC's and much faster than that of DNC or NTM. Table \ref{tab:wctime}
compares the real running time of several memory-based models on Priority
Sort task.

\subsection{Relationship between OPA and bi-linear model\label{subsec:Relationship-between-OPA-1}}
\begin{prop}
\label{prop:opa_bilinear}Given the number of key-value pairs $n_{kv}=1$,
and $\mathcal{G}$ is a high dimensional linear transformation $\mathcal{G}:\mathbb{R}^{d_{qk}\times d_{v}}\rightarrow\mathbb{R}^{n}$,
$\mathcal{G}\left(X\right)=W^{g}\mathcal{V}\left(X\right)$ where
$W^{g}\in\mathbb{R}^{n\times d_{qk}d_{v}}$, $\mathcal{V}$ is a function
that flattens its input tensor, then $\mathcal{G}\left(A^{\otimes}\left(q,K,V\right)\right)$
can be interpreted as a bi-linear model between $f$ and $v_{1}$,
that is

\begin{equation}
\mathcal{G}\left(A^{\otimes}\left(q,K,V\right)\right)\left[s\right]=\sum_{j=1}^{d_{qk}}\sum_{t=1}^{d_{v}}W^{g}\left[s,j,t\right]f\left[j\right]v_{1}\left[t\right]\label{eq:opa_bil}
\end{equation}
where $W^{g}\left[s,j,t\right]=W^{g}\left[s\right]\left[\left(j-1\right)d_{v}+t\right]$,$s=1,...,n$,
$j=1,...,d_{qk}$, $t=1,...,d_{v}$, and $f=\mathcal{F}\left(q\odot k_{1}\right)$.
\end{prop}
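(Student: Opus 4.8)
The plan is to unwind the definitions on both sides and observe that the claimed identity is just the statement that flattening an outer product and applying a matrix is the same as a double sum over the two vectors' coordinates. First I would write out $A^{\otimes}\left(q,K,V\right)$ in the special case $n_{kv}=1$: by definition it equals $\mathcal{F}\left(q\odot k_{1}\right)\otimes v_{1} = f\otimes v_{1}$, a matrix in $\mathbb{R}^{d_{qk}\times d_{v}}$ whose $\left(j,t\right)$ entry is $f\left[j\right]v_{1}\left[t\right]$. Then I would apply the flattening $\mathcal{V}$, which sends that matrix to the vector in $\mathbb{R}^{d_{qk}d_{v}}$ whose $\left(\left(j-1\right)d_{v}+t\right)$-th component is $f\left[j\right]v_{1}\left[t\right]$ — this is exactly the row-major flattening convention already implicit in the definition of $W^{g}\left[s,j,t\right]$.

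Next I would compute $\mathcal{G}\left(A^{\otimes}\right)\left[s\right] = \left(W^{g}\mathcal{V}\left(f\otimes v_{1}\right)\right)\left[s\right] = \sum_{m=1}^{d_{qk}d_{v}} W^{g}\left[s\right]\left[m\right]\,\mathcal{V}\left(f\otimes v_{1}\right)\left[m\right]$, then re-index the single sum over $m$ as a double sum over $j=1,\dots,d_{qk}$ and $t=1,\dots,d_{v}$ via the bijection $m = \left(j-1\right)d_{v}+t$. Substituting $\mathcal{V}\left(f\otimes v_{1}\right)\left[\left(j-1\right)d_{v}+t\right] = f\left[j\right]v_{1}\left[t\right]$ and $W^{g}\left[s\right]\left[\left(j-1\right)d_{v}+t\right] = W^{g}\left[s,j,t\right]$ yields precisely the right-hand side of Eq.~\ref{eq:opa_bil}. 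This establishes the identity; the final sentence of the proof would just note that this double-sum form is the textbook definition of a bi-linear form in $f$ and $v_{1}$ with coefficient tensor $W^{g}\left[s,\cdot,\cdot\right]$.

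There is essentially no hard part here — the result is definitional once the flattening convention is pinned down. The only thing to be careful about is consistency of the flattening order: $\mathcal{V}$ must flatten row-by-row (so that index $j$ varies slowest and $t$ fastest) in order for the formula $W^{g}\left[s,j,t\right] = W^{g}\left[s\right]\left[\left(j-1\right)d_{v}+t\right]$ to hold, and I would state this convention explicitly at the outset rather than leave it to the reader. If one wanted a column-major convention instead, the index arithmetic would change to $\left(t-1\right)d_{qk}+j$, but since the proposition's statement already fixes the convention, the proof simply matches it. No earlier result is actually needed — unlike Proposition~\ref{prop:opa_g_dpa1}, this one does not invoke Lemma~\ref{lem:induct}, since with $n_{kv}=1$ there is no sum over key-value pairs to reorder.
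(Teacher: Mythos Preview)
Your proposal is correct and matches the paper's proof essentially step for step: the paper likewise writes $\mathcal{V}\left(\mathcal{F}\left(q\odot k_{1}\right)\otimes v_{1}\right)\left[\left(j-1\right)d_{v}+t\right]=\mathcal{F}\left(q\odot k_{1}\right)\left[j\right]v_{1}\left[t\right]$, expands $\left(W^{g}\mathcal{V}\left(f\otimes v_{1}\right)\right)\left[s\right]$ as a single sum over $u=1,\dots,d_{qk}d_{v}$, and re-indexes via $u=\left(j-1\right)d_{v}+t$ to obtain the double sum. Your explicit remark about the row-major flattening convention and the observation that Lemma~\ref{lem:induct} is not needed here are accurate additions that the paper leaves implicit.
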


\begin{proof}
By definition, 

\begin{align*}
\mathcal{V}\left(\mathcal{F}\left(q\odot k_{1}\right)\otimes v_{1}\right)\left[\left(j-1\right)d_{v}+t\right] & =\left(\mathcal{F}\left(q\odot k_{1}\right)\otimes v_{1}\right)\left[j\right]\left[t\right]\\
 & =\mathcal{F}\left(q\odot k_{1}\right)\left[j\right]v_{1}\left[t\right]
\end{align*}
We derive the LHS,

\begin{align*}
\mathcal{G}\left(A^{\otimes}\left(q,K,V\right)\right)\left[s\right] & =\left(W^{g}\mathcal{V}\left(\mathcal{F}\left(q\odot k_{1}\right)\otimes v_{1}\right)\right)\left[s\right]\\
 & =\sum_{u=1}^{d_{qk}d_{v}}W^{g}\left[s\right]\left[u\right]\mathcal{V}\left(\mathcal{F}\left(q\odot k_{1}\right)\otimes v_{1}\right)\left[u\right]\\
 & =\sum_{\left(j-1\right)d_{v}+t}^{d_{qk}d_{v}}\left(W^{g}\left[s\right]\left[\left(j-1\right)d_{v}+t\right]\right.\\
 & \times\left.\mathcal{V}\left(\mathcal{F}\left(q\odot k_{1}\right)\otimes v_{1}\right)\left[\left(j-1\right)d_{v}+t\right]\right)\\
 & =\sum_{j=1}^{d_{qk}}\sum_{t=1}^{d_{v}}W^{g}\left[s,j,t\right]\mathcal{F}\left(q\odot k_{1}\right)\left[j\right]v_{1}\left[t\right]
\end{align*}
which equals the RHS.
\end{proof}
Prop. \ref{prop:opa_bilinear} is useful since it demonstrates the
representational capacity of OPA is at least equivalent to bi-linear
pooling, which is richer than low-rank bi-linear pooling using Hadamard
product, or bi-linear pooling using identity matrix of the bi-linear
form (dot product), or the vanilla linear models using traditional
neural networks. 
\begin{prop}
\label{prop:SAMbili}Given the number of queries $n_{q}=d_{qk}$,
the number of key-value pairs $n_{kv}=1$, $\mathcal{M}_{t}^{r}=\SAM_{\theta}\left(M\right)$
where $M$ is an instance of the item memory in the past, and $\mathcal{G}$
is a high dimensional linear transformation $\mathcal{G}:\mathbb{R}^{n_{q}\times d_{qk}\times d_{v}}\rightarrow\mathbb{R}^{d_{qk}\times d_{v}}$,
$\mathcal{G}\left(X\right)=W^{g}\mathcal{V}_{f}\left(X\right)$ where
$W^{g}\in\mathbb{R}^{d_{qk}\times n_{q}d_{qk}}$, $\mathcal{V}_{f}$
is a function that flattens the first two dimensions of its input
tensor, then Eq. \ref{eq:mr_transfer} can be interpreted as a Hebbian
update to the item memory.
\end{prop}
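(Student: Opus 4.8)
The plan is to unfold $\SAM_\theta$ in the degenerate case $n_{kv}=1$ and exploit the observation that, when there is a single key--value pair, every frontal slice of $\mathcal{M}_t^r$ shares the \emph{same} right factor, so that after the linear map the whole transfer term collapses to one outer product. Under the proposition's hypothesis the composite $\mathcal{G}_1\circ\mathcal{V}_f$ appearing in Eq. \ref{eq:mr_transfer} is exactly the linear map $X\mapsto W^g\mathcal{V}_f(X)$, which is the manipulation already carried out in the proof of Prop. \ref{prop:opa_bilinear}.

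First I would apply Eq. \ref{eq:opsa} with $n_{kv}=1$ to write each slice as $\mathcal{M}_t^r[s]=\mathcal{F}\!\left(M_q[s]\odot M_k[1]\right)\otimes M_v[1]$ for $s=1,\dots,n_q$. Setting $k:=M_k[1]$, $v:=M_v[1]$ and $w_s:=\mathcal{F}(M_q[s]\odot k)\in\mathbb{R}^{d_{qk}}$, this is $\mathcal{M}_t^r[s]=w_s\otimes v$, and the structural point is that $v$ does not depend on $s$. Flattening the first two dimensions stacks the rows $w_s[j]\,v^\top$ into a single $(n_qd_{qk})\times d_v$ matrix whose row indexed by $(s-1)d_{qk}+j$ is the scalar multiple $w_s[j]\,v^\top$; thus every row of $\mathcal{V}_f(\mathcal{M}_t^r)$ is proportional to $v^\top$. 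Since $\mathcal{G}_1(X)=W^gX$ acts linearly on the left (i.e. by forming linear combinations of rows), the product $W^g\mathcal{V}_f(\mathcal{M}_t^r)$ again has every row proportional to $v^\top$: its $(a,t)$ entry equals $\bigl(\sum_{s=1}^{n_q}\sum_{j=1}^{d_{qk}}W^g[a,(s-1)d_{qk}+j]\,w_s[j]\bigr)v[t]$. Hence $\mathcal{G}_1\circ\mathcal{V}_f\circ\mathcal{M}_t^r=h\otimes v$ where $h\in\mathbb{R}^{d_{qk}}$ is given by $h[a]=\sum_{s,j}W^g[a,(s-1)d_{qk}+j]\,\mathcal{F}(M_q[s]\odot k)[j]$. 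Substituting into Eq. \ref{eq:mr_transfer} then gives $\mathcal{M}_t^i\leftarrow\mathcal{M}_t^i+\alpha_3\,h\otimes v$, which is precisely the rank-one outer-product (Hebbian) form of the update $X_t=f_1(x_t)\otimes f_2(x_t)$ in Eq. \ref{eq:hebb_update}, with $h$ and $v$ playing the roles of the two factors.

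To close the interpretation I would note that $v=\mathcal{LN}(W_vM)[1]$ is a fixed linear read-out of a past instance $M$ of the item memory, so the transfer writes a new association whose value side is a long-term content of $M$; a ``trivial'' $\mathcal{G}_1$ amounts to choosing $W^g$ so that $h$ is a prescribed constant key, recovering a pure Hebbian store of the recalled value $v$ into the current item memory. I expect the only real obstacle to be cosmetic: keeping the flattening index $(s-1)d_{qk}+j$ consistent and phrasing the ``all rows proportional to $v$'' step cleanly; there is no analytic content, and $\mathcal{F}$ is irrelevant beyond producing the vectors $w_s$. The one softer point is justifying the word ``Hebbian,'' which I would do simply by matching the derived term $\alpha_3\,h\otimes v$ against the outer-product update in Eq. \ref{eq:hebb_update}.
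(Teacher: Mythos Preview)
Your proposal is correct and follows the same core computation as the paper: unfold $\SAM_\theta(M)$ with $n_{kv}=1$, observe that every frontal slice shares the right factor $v=M_v[1]$, and conclude that $W^g\mathcal{V}_f(\mathcal{M}_t^r)=h\otimes v$ so that Eq.~\ref{eq:mr_transfer} becomes a rank-one Hebbian update. The only difference is one of scope: you show the outer-product factorization holds for \emph{any} $W^g$ (and indeed your argument never uses the hypothesis $n_q=d_{qk}$), whereas the paper, after deriving the same entry-wise identity $\mathcal{G}(\SAM_\theta(M))[i,t]=\bigl(\sum_{s,j}W^g[i,s,j]f[s,j]\bigr)v_1[t]$, specializes further to the ``trivial'' rank-one choice $W^g[i]=d_i\,\mathcal{V}_f(I)$ (which needs $n_q=d_{qk}$ so that $I$ is square) to make the left factor explicit as $d[i]=d_i\sum_s\mathcal{F}(M_q[s,s]k_1[s])$. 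Your version is thus slightly cleaner and more general; the paper's extra step is just there to illustrate concretely what ``trivial $\mathcal{G}_1$'' means in the remark preceding Eq.~\ref{eq:mr_transfer}.
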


\begin{proof}
Let $k_{1}=M_{k}$ and $v_{1}=M_{v}$ when $n_{kv}=1$, by definition
$\mathcal{V}_{f}\left(\SAM_{\theta}\left(M\right)\right)\left[\left(s-1\right)d_{qk}+j,t\right]=\mathcal{F}\left(M_{q}\left[s\right]\odot k_{1}\right)\left[j\right]v_{1}\left[t\right]$.
We derive,

\begin{align}
\mathcal{G}\left(\SAM_{\theta}\left(M\right)\right)\left[i,t\right] & =\left(W^{g}\mathcal{V}_{f}\left(\SAM_{\theta}\left(M\right)\right)\right)\text{\ensuremath{\left[i,t\right]}}\nonumber \\
 & =\sum_{u=1}^{n_{q}d_{qk}}W^{g}\left[i,u\right]\mathcal{V}_{f}\left(\SAM_{\theta}\left(M\right)\right)\left[u,t\right]\nonumber \\
 & =\sum_{\left(s-1\right)d_{qk}+j=1}^{n_{q}d_{qk}}\left(W^{g}\left[i,\left(s-1\right)d_{qk}+j\right]\right.\nonumber \\
 & \times\left.\mathcal{F}\left(M_{q}\left[s\right]\odot k_{1}\right)\left[j\right]v_{1}\left[t\right]\right)\nonumber \\
 & =\sum_{s=1}^{n_{q}}\sum_{j=1}^{d_{qk}}W^{g}\left[i,s,j\right]f\left[s,j\right]v_{1}\left[t\right]\label{eq:sam_bil}
\end{align}
where $f\left[s,j\right]=\mathcal{F}\left(M_{q}\left[s\right]\odot k_{1}\right)\left[j\right]=\mathcal{F}\left(M_{q}\left[s,j\right]k_{1}\left[j\right]\right)$.
It should be noted that with trivial rank-one $W^{g}$: $W^{g}\left[i\right]=d_{i}\mathcal{V}_{f}\left(I\right)$,
$d_{i}\in\mathbb{R}$, $I$ is the identity matrix, Eq. \ref{eq:sam_bil}
becomes 

\begin{align*}
\mathcal{G}\left(\SAM_{\theta}\left(M\right)\right)\left[i,t\right] & =d\left[i\right]v_{1}\left[t\right]\\
\mathcal{\Rightarrow G}\left(\SAM_{\theta}\left(M\right)\right) & =d\otimes v_{1}
\end{align*}
where $d\in\mathbb{R}^{d_{qk}},d\left[i\right]=d_{i}\sum_{s=1}^{n_{q}}\mathcal{F}\left(M_{q}\left[s,s\right]k_{1}\left[s\right]\right)$.
Eq. \ref{eq:mr_transfer} reads

\[
\mathcal{M}_{t}^{i}=\mathcal{M}_{t}^{i}+\alpha_{3}d\otimes v_{1}
\]
which is a Hebbian update with the updated value $v_{1}$. As $v_{1}$
is a stored pattern extracted from $M$ encoded in the relational
memory, the item memory is enhanced with a long-term stored value
from the relational memory.
\end{proof}

\subsection{OPA and SAM as associative memory\label{subsec:OPA-and-SAM}\protect\footnote{In this section, we use these following properties without explanation:
$a^{\top}\left(b\otimes c\right)=\left(a^{\top}b\right)c^{\top}$
and $\left(b\otimes c\right)a=\left(c^{\top}a\right)b.$ }}
\begin{prop}
\label{prop:opa_assoc}If $\mathcal{P}$ is a contraction: $\mathbb{R}^{d_{qk}\times d_{v}}\rightarrow\mathbb{R}^{d_{v}}$,
$\mathcal{P}\left(X\right)=a_{p}X,a_{p}\in\mathbb{R}^{1\times d_{qk}}$,
then $A^{\otimes}\left(q,K,V\right)$ is an associative memory that
stores patterns $\left\{ v_{i}\right\} _{i=1}^{n_{kv}}$ and $\mathcal{P}\left(A^{\otimes}\left(q,K,V\right)\right)$
is a retrieval process. Perfect retrieval is possible under the following
three conditions,

$\left(1\right)\left\{ k_{i}\right\} _{i=1}^{n_{kv}}$ form a set
of linearly independent vectors

$\left(2\right)q_{i}\neq0$, $i=1,...,d_{qk}$ 

$\left(3\right)\mathcal{F}$ is chosen as $\mathcal{F}\left(x\right)=a^{f}\odot x$
($a^{f},x\in\mathbb{R}^{d_{qk}}$, $a_{i}^{f}\neq0$, $i=1,...,d_{qk}$)
\end{prop}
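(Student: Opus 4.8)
The plan is to exhibit an explicit query $q$ that makes $\mathcal{P}(A^{\otimes}(q,K,V))$ return any desired stored pattern $v_m$, which establishes that the whole construction behaves as an addressable associative memory. First I would fix a target index $m$ and choose the contraction to be the "sum over rows" map, i.e. $a_p=[1,\dots,1]\in\mathbb{R}^{1\times d_{qk}}$, so that $\mathcal{P}(A^{\otimes}(q,K,V))=\sum_{j=1}^{d_{qk}}\big(\sum_{i=1}^{n_{kv}}\mathcal{F}(q\odot k_i)[j]\,v_i\big)$; pushing the sums around gives $\sum_{i=1}^{n_{kv}}\big(\sum_{j=1}^{d_{qk}}\mathcal{F}(q\odot k_i)[j]\big)v_i$. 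With the choice $\mathcal{F}(x)=a^f\odot x$ from condition $(3)$, the inner scalar becomes $\sum_j a^f_j q_j k_{ij}=(a^f\odot q)\cdot k_i$, so the retrieval reduces to $\sum_i \big((a^f\odot q)\cdot k_i\big)v_i$. Hence perfect retrieval of $v_m$ amounts to finding $q$ with $(a^f\odot q)\cdot k_i=\delta_{im}$ for all $i$.

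Next I would solve that linear system. Condition $(1)$ says the $k_i$ are linearly independent, so the Gram-type conditions $\langle w,k_i\rangle=\delta_{im}$ have a solution $w$ (for instance $w=\sum_i c_i k_i$ with the $c_i$ determined by inverting the Gram matrix $[k_i\cdot k_j]$, which is invertible by independence; or more simply pick $w$ in the span of $\{k_i\}$ so that it is the appropriate dual vector). Then set $q_j=w_j/a^f_j$, which is well defined because $a^f_j\neq 0$ by condition $(3)$; condition $(2)$ — all $q_i\neq 0$ — is then generically satisfied, and one notes that the dual vector $w$ has no zero coordinates as long as the $k_i$ are in "general position", so the $q_j$ are all nonzero. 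With this $q$ we get $\mathcal{P}(A^{\otimes}(q,K,V))=v_m$ exactly, so $A^{\otimes}$ stored $\{v_i\}$ and $\mathcal{P}$ read out any one of them.

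The main obstacle I expect is the role of condition $(2)$: the argument above only naturally produces $q$ from $w$ via division by $a^f$, and whether every coordinate of $q$ is nonzero is what condition $(2)$ is there to guarantee; I would phrase the statement so that $(2)$ is read as a hypothesis on the particular $q$ under consideration (i.e. the claim is "if $q$ additionally has all nonzero coordinates and solves the dual system, retrieval is perfect"), rather than something derived. A secondary subtlety is the bookkeeping with the identities $a^\top(b\otimes c)=(a^\top b)c^\top$ and $(b\otimes c)a=(c^\top a)b$ quoted in the footnote, which I would use to keep the contraction manipulations clean rather than expanding coordinates by hand. So the skeleton is: (i) expand $\mathcal{P}(A^{\otimes})$ using the contraction definition and linearity; (ii) substitute $\mathcal{F}(x)=a^f\odot x$ to collapse the $j$-sum into a dot product $(a^f\odot q)\cdot k_i$; (iii) invoke linear independence of $\{k_i\}$ to pick $q$ realizing $(a^f\odot q)\cdot k_i=\delta_{im}$, using $a^f_i\neq 0$ and $q_i\neq 0$ to make the construction well posed; (iv) conclude perfect retrieval of the $m$-th pattern, hence $A^{\otimes}$ is an associative memory and $\mathcal{P}$ its retrieval operation.
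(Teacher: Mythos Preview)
Your computation is clean, but you have inverted the roles of $q$ and $a_p$. In the proposition, $A^{\otimes}(q,K,V)$ is the memory object---so $q$ is fixed once and for all---and the retrieval operator is $\mathcal{P}$, i.e.\ retrieval is performed by varying $a_p$. The paper's proof accordingly keeps $q$ fixed, sets $x_i=\mathcal{F}(q\odot k_i)$, observes that $A^{\otimes}=\sum_i x_i\otimes v_i$ is a standard hetero-associative memory, and then chooses $a_p$ (e.g.\ $a_p=x_j^\top/\lVert x_j\rVert$ in the orthogonal case, or via Gram--Schmidt/Widrow--Hoff in the merely independent case) to read out $v_j$. The substantive step is the lemma that conditions $(1)$, $(2)$, $(3)$ force $\{x_i\}$ to be linearly independent: if $\sum_i\alpha_i x_i=0$ then $(a^f\odot q)\odot\sum_i\alpha_i k_i=0$, and since every coordinate of $a^f\odot q$ is nonzero (that is precisely what $(2)$ and $(3)$ give), one concludes $\sum_i\alpha_i k_i=0$, contradicting $(1)$.

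Your scheme instead freezes $a_p=[1,\dots,1]$ and manufactures a different $q$ for each target $v_m$. That means you are building a \emph{different} memory $A^{\otimes}(q,K,V)$ for every retrieval, which is not what ``$A^{\otimes}$ is an associative memory'' asserts. This misreading is exactly why condition $(2)$ becomes awkward in your argument: you end up hoping the constructed $q$ has no zero coordinates rather than using $(2)$ as a hypothesis. In the paper's framing, $(2)$ is a genuine assumption on the given $q$ and is consumed in the linear-independence argument above. Rework the proof with $q$ fixed and $a_p$ as the free parameter; then $(1)$--$(3)$ slot in naturally.
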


\begin{proof}
By definition, $A^{\otimes}\left(q,K,V\right)$ forms a hetero-associative
memory between $x_{i}=\mathcal{F}\left(q\odot k_{i}\right)$ and $v_{i}$.
If $\left\{ x_{i}\right\} _{i=1}^{n_{kv}}$ are orthogonal, given
some $\mathcal{P}$ with $a_{p}=\frac{x_{j}^{\top}}{\left\Vert x_{j}^{\top}\right\Vert }$,
then 

\begin{align*}
\mathcal{P}\left(A^{\otimes}\left(q,K,V\right)\right) & =\frac{x_{j}^{\top}}{\left\Vert x_{j}^{\top}\right\Vert }\sum_{i=1}^{n_{kv}}x_{i}\otimes v_{i}\\
 & =\sum_{i=1,i\neq j}^{n_{kv}}\frac{\left(x_{j}^{\top}x_{i}\right)}{\left\Vert x_{j}^{\top}\right\Vert }v_{i}^{\top}+\frac{\left(x_{j}^{\top}x_{j}\right)}{\left\Vert x_{j}^{\top}\right\Vert }v_{j}^{\top}\\
 & =v_{j}^{\top}
\end{align*}
Hence, we can perfectly retrieve some stored pattern $v_{j}$ using
its associated $\mathcal{P}$. In practice, linearly independent $\left\{ x_{i}\right\} _{i=1}^{n_{kv}}$
is enough for perfect retrieval since we can apply Gram--Schmidt
process to construct orthogonal $\left\{ x_{i}\right\} _{i=1}^{n_{kv}}$.
Another solution is to follow Widrow-Hoff incremental update

\begin{align*}
A^{\otimes}\left(q,K,V\right)\left(0\right) & =0\\
A^{\otimes}\left(q,K,V\right)\left(i\right) & =A^{\otimes}\left(q,K,V\right)\left(i-1\right)\\
 & +\left(v_{i}-A^{\otimes}\left(q,K,V\right)\left(i-1\right)x_{i}\right)\otimes x_{i}
\end{align*}
which also results in possible perfect retrieval given $\left\{ x_{i}\right\} _{i=1}^{n_{kv}}$
are linearly independent.

Now, we show that if $\left(1\right)\left(2\right)\left(3\right)$
are satisfied, $\left\{ x_{i}\right\} _{i=1}^{n_{kv}}$ are linearly
independent using proof by contradiction. Assume that $\left\{ x_{i}\right\} _{i=1}^{n_{kv}}$
are linearly dependent, $\exists\left\{ \alpha_{i}\in\mathbb{R}\right\} _{i=1}^{n_{kv}}$,
not all zeros such that

\begin{align}
\overrightarrow{0} & =\sum_{i=1}^{n_{kv}}\alpha_{i}x_{i}=\sum_{i=1}^{n_{kv}}\alpha_{i}\mathcal{F}\left(q\odot k_{i}\right)\nonumber \\
 & =\sum_{i=1}^{n_{kv}}\alpha_{i}\left(a^{f}\odot\left(q\odot k_{i}\right)\right)\nonumber \\
 & =\left(a^{f}\odot q\right)\odot\left(\sum_{i=1}^{n_{kv}}\alpha_{i}k_{i}\right)\label{eq:linear_dep}
\end{align}
As $\left(2\right)\left(3\right)$ hold true, Eq. \ref{eq:linear_dep}
is equivalent to

\[
\overrightarrow{0}=\sum_{i=1}^{n_{kv}}\alpha_{i}k_{i}
\]
which contradicts $\left(1\right)$. 
\end{proof}
Prop. \ref{prop:opa_assoc} is useful as it points out the potential
of our OPA formulation for accurate associative retrieval over several
key-value pairs. That is, despite that many items are extracted to
form the relational representation, we have the chance to reconstruct
any items perfectly if the task requires item memory. As later we
use neural networks to generate $k$ and $q$, the model can learn
to satisfy conditions $\left(1\right)$ and $\left(2\right)$. Although
in practice, we use element-wise $\tanh$ to offer non-linear transformation,
which is different from $\left(3\right)$, empirical results show
that our model still excels at accurate associative retrieval. 
\begin{prop}
\label{prop:opa_assoc-1}Assume that the gates in Eq. \ref{eq:gate}
are kept constant $F_{t}=I_{t}=1$, the item memory construction is
simplified to

\[
M=\sum_{i=1}^{N+1}x_{i}\otimes x_{i},
\]
where $\left\{ x_{i}\right\} _{i=1}^{N+1}$ are positive input patterns
after feed-forward neural networks and the relational memory construction
is simplified to

\[
\mathcal{M}^{r}=\SAM_{\theta}\left(M\right),
\]
and layer normalizations are excluded, then the memory retrieval is
a two-step contraction

\[
v^{r}=\softmax\left(z^{\top}\right)\mathcal{M}^{r}f\left(x\right)
\]
\end{prop}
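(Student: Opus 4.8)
The claim is that reconstructing a previously stored item from $\mathcal{M}^{r}=\SAM_{\theta}\left(M\right)$ can be realized, under the stated simplifications, by the two-step contraction $v^{r}=\softmax\left(z^{\top}\right)\mathcal{M}^{r}f\left(x\right)$; the plan is to compute this expression in terms of the stored patterns $\left\{ x_{i}\right\}$ and then exhibit choices of $z$, $f$ and of the SAM weights $\theta$ that make it equal a prescribed $x_{\ell}$. First I would note that, with layer normalization dropped and $M=\sum_{i=1}^{N+1}x_{i}\otimes x_{i}$, the extracted rows $M_{q}\left[s\right]=W_{q}\left[s\right]M$, $M_{k}\left[j\right]=W_{k}\left[j\right]M$, $M_{v}\left[j\right]=W_{v}\left[j\right]M$ are each of the form $\sum_{i}\left(w\cdot x_{i}\right)x_{i}^{\top}$ for the corresponding weight row $w$, so each is a linear combination of the stored patterns, and that by definition $\mathcal{M}^{r}\left[s\right]=\sum_{j=1}^{n_{kv}}\mathcal{F}\left(M_{q}\left[s\right]\odot M_{k}\left[j\right]\right)\otimes M_{v}\left[j\right]$. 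Applying the identity $\left(b\otimes c\right)a=\left(c^{\top}a\right)b$ from the footnote of $\mathsection$\ref{subsec:OPA-and-SAM}, the last-mode contraction gives $\mathcal{M}^{r}\left[s\right]f\left(x\right)=\sum_{j}\left(M_{v}\left[j\right]^{\top}f\left(x\right)\right)\mathcal{F}\left(M_{q}\left[s\right]\odot M_{k}\left[j\right]\right)$, and then weighting over $s$ by $\softmax\left(z\right)$ collapses the first mode, so that $v^{r}=\sum_{s}\softmax\left(z\right)_{s}\mathcal{M}^{r}\left[s\right]f\left(x\right)$; this already exhibits the retrieval as the claimed two-step contraction, with $z=f_{3}\left(x\right)$ weighting the slices and $f=f_{2}\left(x\right)$ acting as the query.

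It remains to choose parameters under which this double sum actually returns a stored $x_{\ell}$. Following Prop.~\ref{prop:opa_assoc} I would take $\left\{ x_{i}\right\}$ linearly independent --- the positivity assumption together with the linear feed-forward maps secures the nonzero-ness conditions, and Gram--Schmidt gives orthonormality when convenient --- and then choose $W_{v}$ so that $M_{v}\left[j\right]=x_{j}^{\top}$, choose $W_{k}$ so that each key factor $M_{k}\left[j\right]$ is the all-ones vector $\mathbf{1}$ (whence $M_{q}\left[s\right]\odot M_{k}\left[j\right]=M_{q}\left[s\right]$, removing the element-wise nuisance), choose $W_{q}$ so that $M_{q}\left[s\right]=x_{s}^{\top}$, and take $\mathcal{F}$ element-wise linear without bias as in condition $\left(3\right)$ of Prop.~\ref{prop:opa_assoc}. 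With the learnable query set to $f\left(x\right)=x_{\ell}$ the last-mode contraction leaves $M_{v}\left[j\right]^{\top}f\left(x\right)=\delta_{j\ell}$, hence $\mathcal{M}^{r}\left[s\right]f\left(x\right)=x_{s}$; choosing $z$ so that $\softmax\left(z\right)$ concentrates on $s=\ell$ then yields $v^{r}=x_{\ell}$. Since $f_{2}$ and $f_{3}$ are feed-forward networks they can in principle realize such delta-like vectors, which is exactly the ``the model can learn to satisfy the conditions'' situation discussed after Prop.~\ref{prop:opa_assoc}.

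I expect the main obstacle to be the Hadamard factor $M_{q}\left[s\right]\odot M_{k}\left[j\right]$ sitting inside $\mathcal{F}$: unlike the pure outer-product memory of Prop.~\ref{prop:opa_assoc}, the ``address'' of this hetero-associative memory is an element-wise product of two extracted rows, which does not commute with the subsequent linear contractions, so the key factor has to be made mode-independent first --- the all-ones choice above, which also requires $\left\{ x_{i}\right\}$ to span the relevant subspace --- before the algebra closes. A secondary technicality is that $M=\sum_{i}x_{i}\otimes x_{i}$ acts on the weight rows through the Gram structure of $\left\{ x_{i}\right\}$; I would discharge this by verifying that $W_{v}\left[j\right]=x_{j}$ (resp.\ $W_{k}\left[j\right]=\mathbf{1}$) gives $M_{v}\left[j\right]=x_{j}^{\top}$ (resp.\ $M_{k}\left[j\right]=\mathbf{1}^{\top}$) when $\left\{ x_{i}\right\}$ is orthonormal, using that $\sum_{i}\left(w^{\top}x_{i}\right)x_{i}$ is the orthogonal projection of $w$ onto $\mathrm{span}\left\{ x_{i}\right\}$. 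Everything else is routine bookkeeping with the two contraction identities already quoted in $\mathsection$\ref{subsec:OPA-and-SAM}.
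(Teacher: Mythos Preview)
Your argument is correct and reaches the same conclusion --- that the two-step contraction $\softmax(z^{\top})\mathcal{M}^{r}f(x)$ can return a stored pattern --- but by a genuinely different construction than the paper's. The paper chooses the SAM weights so that \emph{every} extracted query and key equals $x_{p}^{\top}$ (where $x_{p}=x_{N+1}$ is the target) while every value equals $\sum_{i}x_{i}^{\top}$; it then takes $\mathcal{F}$ to be the element-wise square root so that $\mathcal{F}(x_{p}\odot x_{p})=x_{p}$, which collapses each slice to the same rank-one matrix $\SAM_{\theta}(M)[s]=n_{kv}\,x_{p}\otimes\sum_{i}x_{i}$. The softmax step is therefore vacuous, and the second contraction by $f(x)=x/n_{kv}$ with a \emph{noisy} cue $x\approx x_{p}$ recovers $x_{p}$ via $\sum_{i}(x_{i}^{\top}x)x_{p}\approx x_{p}$. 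By contrast, you neutralize the Hadamard factor by forcing $M_{k}[j]=\mathbf{1}$, keep $\mathcal{F}$ linear as in Prop.~\ref{prop:opa_assoc}, and let the softmax over $s$ do the selection. The paper's route is closer in spirit to associative recall --- an imperfect cue $x$ retrieves the exact $x_{p}$ --- and only needs $d>N$, at the cost of the non-standard square-root $\mathcal{F}$; your route is algebraically cleaner and exact, but requires orthonormality, $\mathbf{1}\in\mathrm{span}\{x_{i}\}$, and implicitly $n_{q}=n_{kv}=N+1$, and in your construction the contraction by $f(x)$ contributes only a scalar (since $\sum_{j}(x_{j}^{\top}f(x))x_{s}$ is just a multiple of $x_{s}$), so the actual pattern selection is carried entirely by the first-mode softmax rather than by an associative cue.
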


\begin{proof}
Without loss of generality, after seeing $N+1$ patterns $\left\{ x_{i}\right\} _{i=1}^{N+1}$,
$\SAM$ is given a (noisy or incomplete) query pattern $x$ that corresponds
to some stored pattern $x_{p}=x_{N+1}$, that is

\[
\begin{cases}
x_{p}^{\top}x\approx1\\
x_{i}^{\top}x\approx0 & i=\overline{1,N}
\end{cases}
\]
Unrolling Eq. \ref{eq:opsa} yields
\begin{align}
\SAM_{\theta}\left(M\right)\left[s\right] & =\sum_{j=1}^{n_{kv}}\mathcal{F}\left(M_{q}\left[s\right]\odot M_{k}\left[j\right]\right)\otimes M_{v}\left[j\right]\nonumber \\
 & =\sum_{j=1}^{n_{kv}}\mathcal{F}\left(W_{q}\left[s\right]\left(\sum_{i=1}^{N+1}x_{i}\otimes x_{i}\right)\right.\nonumber \\
 & \left.\odot W_{k}\left[j\right]\left(\sum_{i=1}^{N+1}x_{i}\otimes x_{i}\right)\right)\nonumber \\
 & \otimes W_{v}\left[j\right]\left(\sum_{i=1}^{N+1}x_{i}\otimes x_{i}\right)\nonumber \\
 & =\sum_{j=1}^{n_{kv}}\mathcal{F}\left(\left(\sum_{i=1}^{N}W_{q}\left[s\right]x_{i}\otimes x_{i}\right.\right.\nonumber \\
 & +\left.W_{q}\left[s\right]x_{p}\otimes x_{p}\right)\nonumber \\
 & \left.\odot\left(\sum_{i=1}^{N}W_{k}\left[j\right]x_{i}\otimes x_{i}+W_{k}\left[j\right]x_{p}\otimes x_{p}\right)\right)\nonumber \\
 & \otimes\left(\sum_{i=1}^{N}W_{v}\left[j\right]x_{i}\otimes x_{i}+W_{v}\left[j\right]x_{p}\otimes x_{p}\right)\label{eq:unroll_sam}
\end{align}
When $d>N$, it is generally possible to find $W_{q}$, $W_{k}$ and
$W_{v}$ that satisfy the following system of equations:

\[
\begin{cases}
W_{q}\left[s\right]x_{i} & =0,i=\overline{1,N},\\
W_{q}\left[s\right]x_{p} & =1\\
W_{k}\left[j\right]x_{i} & =0,i=\overline{1,N}\\
W_{k}\left[j\right]x_{p} & =1\\
W_{v}\left[j\right]x_{i} & =1,i=\overline{1,N}\\
W_{v}\left[j\right]x_{p} & =1
\end{cases}
\]
We also assume that $\mathcal{F}$ is chosen as square root function,
then Eq. \ref{eq:unroll_sam} simplifies to
\begin{align*}
\SAM_{\theta}\left(M\right)\left[s\right] & =\sum_{j=1}^{n_{kv}}\mathcal{F}\left(x_{p}\odot x_{p}\right)\otimes\sum_{i=1}^{N+1}x_{i}\\
 & =n_{kv}x_{p}\otimes\sum_{i=1}^{N+1}x_{i}\\
 & =n_{kv}\sum_{i=1}^{N+1}x_{p}\otimes x_{i}
\end{align*}

The first contraction $\softmax\left(z^{\top}\right)\mathcal{M}^{r}$
can be interpreted as an attention to $\left\{ \SAM_{\theta}\left(M\right)\left[s\right]\right\} _{s=1}^{n_{q}}$,
which equals

\[
n_{kv}\sum_{i=1}^{N+1}x_{p}\otimes x_{i}
\]
The second contraction is similar to a normal associative memory retrieval.
When we choose $f\left(x\right)=\frac{x}{n_{kv}}$, the retrieval
reads

\begin{align*}
v^{r} & =\left(n_{kv}\sum_{i=1}^{N+1}x_{p}\otimes x_{i}\right)\frac{x}{n_{kv}}\\
 & =\sum_{i=1}^{N+1}\left(x_{i}^{\top}x\right)x_{p}\\
 & \approx x_{p}
\end{align*}
\end{proof}

\subsection{Implementation of gate functions \label{subsec:Implementation-of-gate}}

\[
F_{t}\left(\mathcal{M}_{t-1}^{i},x_{t}\right)=W_{F}x_{t}+U_{F}\tanh\left(\mathcal{M}_{t-1}^{i}\right)+b_{F}
\]
\[
I_{t}\left(\mathcal{M}_{t-1}^{i},x_{t}\right)=W_{I}x_{t}+U_{I}\tanh\left(\mathcal{M}_{t-1}^{i}\right)+b_{I}
\]
Here, $W_{F}$, $U_{F}$, $W_{I}$, $W_{I}\in\mathbb{R}^{d\times d}$
are parametric weights, $b_{F},b_{I}\in\mathbb{R}$ are biases and
$+$ is broadcasted if needed. 

\subsection{Learning curves on ablation study \label{subsec:Learning-curves-abl}}

We plot the learning curves of evaluated modes for Associative retrieval
with length 30, 50 and $N^{th}$-farthest in Fig. \ref{fig:abl_lc}.
For $N^{th}$-farthest, the last input in the sequence is treated
as the query for TPR. We keep the standard number of entities/roles
and tune TPR\footnote{https://github.com/ischlag/TPR-RNN } with different
hidden dimensions (40, 128, 256) and optimizers (Nadam and Adam).
All configurations fail to converge for the normal $N^{th}$-farthest
as shown in Fig. \ref{fig:abl_lc} (right). When we reduce the problem
size to 4 $8$-dimensional input vectors, TPR can reach perfect performance,
which indicates the problem here is more about scaling to bigger relational
reasoning contexts. 

\begin{figure*}
\begin{centering}
\includegraphics[width=0.9\textwidth]{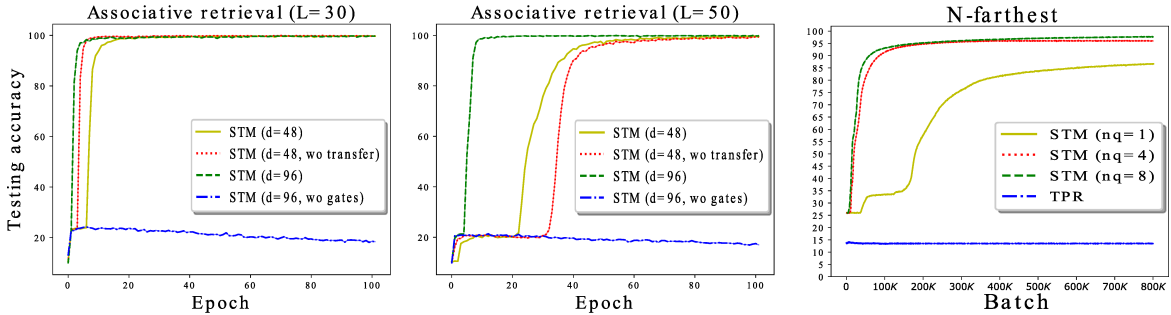}
\par\end{centering}
\caption{Testing accuracy ($\%$) on associative retrieval L=30 (left), L=50
(middle) and $N^{th}$-farthest (right).\label{fig:abl_lc}}

\end{figure*}

\subsection{Implementation of baselines for algorithmic and geometric/graph tasks
\label{subsec:Implementation-of-baselines-alg}}

Following \citet{graves2014neural}, we use RMSprop optimizer with
a learning rate of $10^{-4}$ and a batch size of 128 for all baselines.
\begin{itemize}
\item LSTM and ALSTM: Both use $512$-dimensional hidden vectors for all
tasks.
\item NTM\footnote{https://github.com/vlgiitr/ntm-pytorch}, DNC\footnote{https://github.com/deepmind/dnc}:
Both use a $256$-dimensional LSTM controller for all tasks. For algorithmic
tasks, NTM uses a $128$-slot external memory, each slot is a $32$-dimensional
vector. Following the standard setting, NTM uses 1 control head for
Copy, RAR and 5 control heads for Priority sort. For geometric/graph
tasks, DNC is equipped with $64$-dimensional $20$-slot external
memory and $4$-head controller. In geometric/graph problems, $20$
slots are about the number of points/nodes. We also tested with layer-normalized
DNC without temporal link matrix and got similar results. 
\item RMC\footnote{https://github.com/L0SG/relational-rnn-pytorch}: We
use the default setting with total 1024 dimensions for memory of 8
heads and 8 slots. We also tried with different numbers of slots $\left\{ 1,4,16\right\} $
and Adam optimizer but the performance did not change. 
\item STM: We use the same setting across tasks $n_{q}=8$, $d=96$, $n_{r}=96$.
$\alpha_{1}$,$\alpha_{2}$, and $\alpha_{3}$ are learnable. 
\end{itemize}

\subsection{Order of relationship\label{subsec:Order-of-relationship}}

In this paper, we do not formally define the concept of order of relationship.
Rather, we describe it using concrete examples. When a problem requires
to compute the relationship between items, we regard it as a first-order
relational problem. For example, sorting is first-order relational.
Copy is even zero-order relational since it can be solved without
considering item relationships. When a problem requires to compute
the relationship between relationships of items, we regard it as a
second-order relational problem and so on.

From this observation, we hypothesize that the computational complexity
of a problem roughly corresponds to the order of relationship in the
problem. For example, if a problem requires a solution whose computational
complexity between $O\left(N\right)$ and $O\left(N^{2}\right)$ where
$N$ is the input size, it means the solution basically computes the
relationship between any pair of input items and thus corresponds
to first-order relationship. Table \ref{tab:-Order-of} summarizes
our hypothesis on the order of relationship in some of our problems. 

By design, our proposed STM stores a mixture of relationships between
items in a relational memory, which approximately corresponds to a
maximum of second-order relational capacity. The distillation process
in STM transforms the relational memory to the output and thus determines
the order of relationship that STM can offer. We can measure the degree
that STM involves in relational mining by analyzing the learned weight
$\mathcal{G}_{2}$ of the distillation process. Intuitively, a high-rank
transformation $\mathcal{G}_{2}$ can capture more relational information
from the relational memory. Trivial low-rank $\mathcal{G}$ corresponds
to item-based retrieval without much relational mining (Prop. \ref{prop:SAMbili}).
The numerical rank of a matrix $A$ is defined as $r\left(A\right)=\left\Vert A\right\Vert _{F}^{2}/\left\Vert A\right\Vert _{2}^{2}$,
which relaxes the exact notion of rank \cite{rudelson2007sampling}. 

We report the numerical rank of learned $\mathcal{G}_{2}\in\mathbb{R}^{6144\times96}$
for different tasks in Table \ref{tab:-nurank}. For each task, we
run the training 5 times and take the mean and std. of $r\left(\mathcal{G}_{2}\right)$.
The rank is generally higher for tasks that have higher orders of
relationship. That said, the model tends to overuse its relational
capacity. Even for the zero-order Copy task, the rank for the distillation
transformation is still very high. 

\begin{table}
\begin{centering}
\begin{tabular}{ccc}
\hline 
{\small{}Task} & {\small{}General complexity} & {\small{}Order}\tabularnewline
\hline 
{\small{}Copy/Associative retrieval} & {\small{}$O\left(N\right)$} & {\small{}0}\tabularnewline
{\small{}Sort} & {\small{}$O\left(N\log N\right)$} & {\small{}1}\tabularnewline
{\small{}Convex hull } & {\small{}$O\left(N\log N\right)$} & {\small{}1}\tabularnewline
{\small{}Shortest path}\tablefootnote{{\small{}The input is sequence of triplets, which is equivalent to
sequence of edges. Hence, the complexity is based on the number of
edges in the graph.}} & {\small{}$O\left(E\log V\right)$} & {\small{}1}\tabularnewline
{\small{}Minimum spanning tree} & {\small{}$O\left(E\log V\right)$} & {\small{}1}\tabularnewline
{\small{}RAR/$N^{th}$-Farthest} & {\small{}$O\left(N^{2}\log N\right)$} & {\small{}2}\tabularnewline
{\small{}Traveling salesman problem} & {\small{}NP-hard} & {\small{}many}\tabularnewline
\hline 
\end{tabular}
\par\end{centering}
\caption{ Order of relationship in some problems. \label{tab:-Order-of}}
\end{table}
\begin{table}
\begin{centering}
\begin{tabular}{cc}
\hline 
Task & $r\left(\mathcal{G}_{2}\right)$\tabularnewline
\hline 
Associative retrieval & 9.42$\pm$0.5\tabularnewline
$N^{th}$-Farthest & 83.20$\pm$0.2\tabularnewline
Copy & 79.00$\pm$0.3\tabularnewline
Sort & 79.58$\pm$0.1\tabularnewline
RAR & 83.30$\pm$0.2\tabularnewline
Convex hull  & 80.78$\pm$0.6\tabularnewline
Traveling salesman problem & 83.58$\pm$0.3\tabularnewline
Shortest path & 79.81$\pm$0.2\tabularnewline
Minimum spanning tree & 79.57$\pm$0.5\tabularnewline
\hline 
\end{tabular}
\par\end{centering}
\caption{ Mean and std. of numerical rank of the leanred weight $\mathcal{G}_{2}$
for several tasks. The upper bound for the rank is 96. \label{tab:-nurank}}
\end{table}

\subsection{Geometry and graph task description\label{subsec:Geometry-and-graph-1}}

In this testbed, we use RMSprop optimizer with a learning rate of
$10^{-4}$ and a batch size of 128 for all baselines. STM uses the
same setting across tasks $n_{q}=8$, $d=96$, $n_{r}=96$. The random
one-hot features can be extended to binary features, which is much
harder and will be investigated in our future works. 

\paragraph{Convex hull }

Given a set of $N$ points with 2D coordinates, the model is trained
to output a list of points that forms a convex hull sorted by coordinates.
Training is done with $N\sim\left[5,20\right]$. Testing is done with
$N=5$ and $N=10$ (no prebuilt dataset available for $N=20$). The
output is a sequence of 20-dimensional one-hot vectors representing
the features of the solution points in the convex-hull.

\paragraph{Traveling salesman problem }

Given a set of $N$ points with 2D coordinates, the model is trained
to output a list of points that forms a closed tour sorted by coordinates.
Training is done with $N\sim\left[5,10\right]$. Testing is done with
$N=5$ and $N=10$. The output is a sequence of 20-dimensional one-hot
vectors representing the features of the solution points in the optimal
tour.

\paragraph{Shortest path }

The graph is generated according to the following rules: (1) choose
the number of nodes $N\sim\left[5,20\right]$, (2) after constructing
a path that goes through every node in the graph (to make the graph
connected), determine randomly the edge between nodes (number of edges
$E\sim\left[6,30\right]$), (3) for each edge set the weight $w\sim\left[1,10\right]$.
We generate 100,000 and 10,000 graphs for training and testing, respectively.
The representation for an input graph is a sequence of triplets followed
by 2 feature vectors representing the source and destination node.
The output is a sequence of 40-dimensional one-hot feature vectors
representing the solution nodes in the shortest path.

\paragraph{Minimum spanning tree }

We use the same generated input graphs from the Shortest path task.
The representation for an input graph is only a sequence of triplets.
The output is a sequence of 40-dimensional one-hot feature vectors
representing the features of the nodes in the solution edges of the
minimum spanning tree.

Some generated samples of the four tasks are visualized in Fig. \ref{fig:graph_samples}.
Learning curves are given in Fig. \ref{fig:graph_samples-1}.

\begin{figure*}
\begin{centering}
\includegraphics[width=0.9\textwidth]{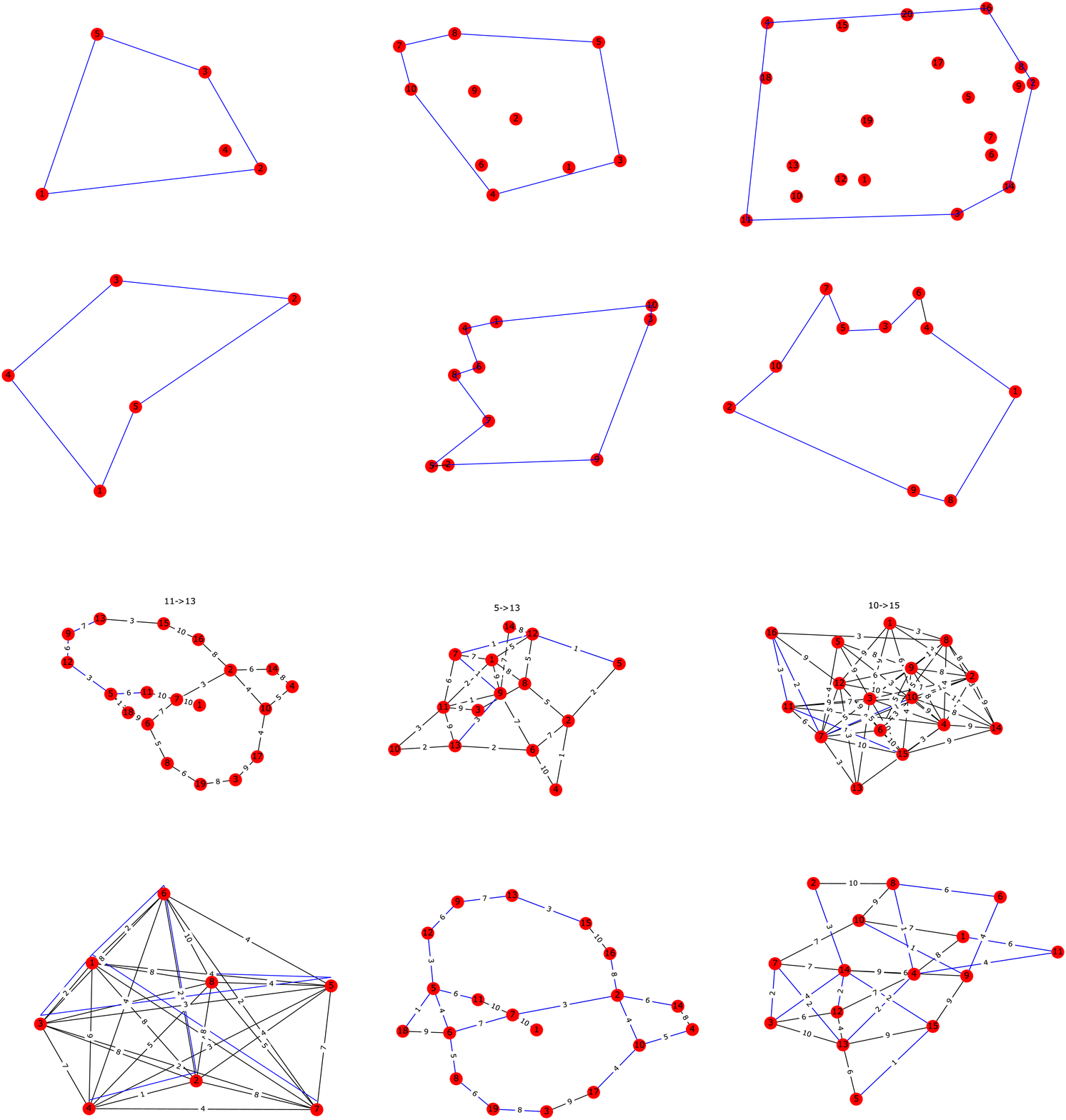}
\par\end{centering}
\caption{Samples of geometry and graph tasks. From top to bottom: Convex hull,
TSP, Shortest path and Minimum spanning tree. Blue denotes the ground-truth
solution.\label{fig:graph_samples}}
\end{figure*}
\begin{figure*}
\begin{centering}
\includegraphics[width=1\textwidth]{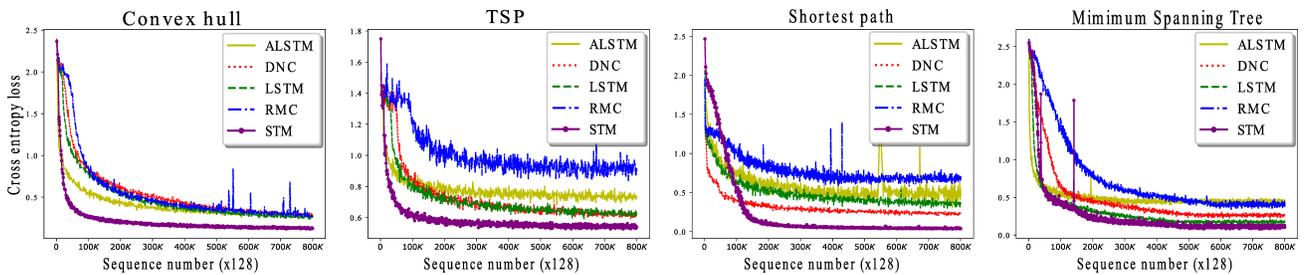}
\par\end{centering}
\caption{Learning curves on geometry and graph tasks.\label{fig:graph_samples-1}}
\end{figure*}

\subsection{Reinforcement learning task description \label{subsec:Reinforcement-learning-task}}

We trained Openai Gym's PongNoFrameskip-v4 using Asynchronous Advantage
Actor-Critic (A3C) with hyper-parameters: 32 workers, shared Adam
optimizer with a learning rate of $10^{-4}$, $\gamma=0.99$. To extract
scene features for LSTM and STM, we use 4 convolutional layers (32
kernels with $5\times5$ kernel sizes and a stride of 1), each of
which is followed by a $2\times2$ max-pooling layer, resulting in
1024-dimensional feature vectors. The LSTM 's hidden size is 512.
STM uses $n_{q}=8$, $d=96$, $n_{r}=96$.

\subsection{bAbI task description \label{subsec:babi_task}}

We use the train/validation/test split introduced in bAbI's en-valid-10k
v1.2 dataset. To make STM suitable for question answering task, each
story is preprocessed into a sentence-level sequence, which is fed
into our STM as the input sequence. The question, which is only 1
sentence, is preprocessed to a query vector. Then, we utilize the
Inference module, which takes the query as input to extract the output
answer from our relational memory $\mathcal{M}^{r}$. The preprocessing
and the Inference module are the same as in \citet{schlag2018learning}.
STM's hyper-parameters are fixed to $n_{q}=20$, $d=90$, $n_{r}=96$.
We train our model jointly for 20 tasks with a batch size of $128$,
using Adam optimizer with a learning rate of $0.006$, $\beta_{1}=0.9$
and $\beta_{2}=0.99$. Details of all runs are listed in Table \ref{tab:Results-from-10}.

\begin{table*}
\begin{centering}
\begin{tabular}{cccccccccccc}
\hline 
Task & run-1 & run-2 & run-3 & run-4 & run-5 & run-6 & run-7 & \textbf{run-8} & run-9 & run-10 & Mean\tabularnewline
\hline 
1 & 0.0 & 0.0 & 0.0 & 0.0 & 0.0 & 0.0 & 0.0 & 0.0 & 0.0 & 0.0 & 0.00 $\pm$ 0.00\tabularnewline
2 & 0.1 & 0.6 & 0.1 & 0.1 & 0.7 & 0.2 & 0.2 & 0.0 & 0.1 & 0.0 & 0.21 $\pm$ 0.23\tabularnewline
3 & 3.4 & 3.2 & 1.0 & 1.3 & 2.4 & 3.8 & 3.2 & 0.5 & 0.9 & 1.6 & 2.13 $\pm$ 1.14\tabularnewline
4 & 0.0 & 0.0 & 0.0 & 0.0 & 0.0 & 0.0 & 0.0 & 0.0 & 0.0 & 0.0 & 0.00 $\pm$ 0.00\tabularnewline
5 & 0.6 & 0.2 & 0.6 & 0.6 & 0.7 & 0.5 & 0.9 & 0.5 & 0.7 & 0.4 & 0.57 $\pm$ 0.18\tabularnewline
6 & 0.0 & 0.0 & 0.0 & 0.0 & 0.0 & 0.1 & 0.0 & 0.0 & 0.0 & 0.0 & 0.00 $\pm$ 0.00\tabularnewline
7 & 1.0 & 0.9 & 0.5 & 0.6 & 0.9 & 1.4 & 1.0 & 0.6 & 0.5 & 0.7 & 0.81 $\pm$ 0.27\tabularnewline
8 & 0.2 & 0.1 & 0.1 & 0.2 & 0.0 & 0.0 & 0.1 & 0.2 & 0.1 & 0.2 & 0.12 $\pm$ 0.07\tabularnewline
9 & 0.0 & 0.0 & 0.0 & 0.0 & 0.0 & 0.0 & 0.0 & 0.0 & 0.0 & 0.0 & 0.00 $\pm$ 0.00\tabularnewline
10 & 0.0 & 0.1 & 0.0 & 0.2 & 0.0 & 0.0 & 0.0 & 0.0 & 0.0 & 0.0 & 0.03 $\pm$ 0.06\tabularnewline
11 & 0.0 & 0.0 & 0.1 & 0.0 & 0.0 & 0.0 & 0.0 & 0.0 & 0.0 & 0.0 & 0.01 $\pm$ 0.03\tabularnewline
12 & 0.1 & 0.1 & 0.1 & 0.0 & 0.0 & 0.1 & 0.0 & 0.0 & 0.0 & 0.0 & 0.04 $\pm$ 0.05\tabularnewline
13 & 0.1 & 0.0 & 0.0 & 0.0 & 0.0 & 0.0 & 0.0 & 0.0 & 0.0 & 0.0 & 0.01 $\pm$ 0.03\tabularnewline
14 & 0.1 & 0.0 & 0.1 & 0.0 & 0.1 & 0.3 & 0.0 & 0.1 & 0.5 & 0.4 & 0.16 $\pm$ 0.17\tabularnewline
15 & 0.0 & 0.0 & 0.0 & 0.0 & 0.0 & 0.0 & 0.0 & 0.0 & 0.0 & 0.0 & 0.00 $\pm$ 0.00\tabularnewline
16 & 0.3 & 0.2 & 0.2 & 0.3 & 0.1 & 0.3 & 0.6 & 0.5 & 0.3 & 0.1 & 0.29 $\pm$ 0.15\tabularnewline
17 & 0.6 & 2.6 & 0.4 & 0.4 & 0.5 & 2.1 & 3.5 & 0.5 & 0.9 & 0.3 & 1.18 $\pm$ 1.07\tabularnewline
18 & 1.0 & 0.3 & 0.2 & 0.1 & 0.4 & 0.4 & 0.2 & 0.0 & 0.1 & 0.0 & 0.27 $\pm$ 0.28\tabularnewline
19 & 4.4 & 0.3 & 0.8 & 0.0 & 8.8 & 0.4 & 0.3 & 0.1 & 4.7 & 0.8 & 2.06 $\pm$ 2.79\tabularnewline
20 & 0.0 & 0.0 & 0.0 & 0.0 & 0.0 & 0.5 & 0.0 & 0.0 & 0.0 & 0.0 & 0.00 $\pm$ 0.00\tabularnewline
\hline 
Average & 0.59 & 0.43 & 0.21 & 0.19 & 0.73 & 0.48 & 0.50 & \textbf{0.15} & 0.44 & 0.23 & 0.39 $\pm$ 0.18\tabularnewline
\hline 
Failed task & \multirow{2}{*}{0} & \multirow{2}{*}{0} & \multirow{2}{*}{0} & \multirow{2}{*}{0} & \multirow{2}{*}{1} & \multirow{2}{*}{0} & \multirow{2}{*}{0} & \multirow{2}{*}{0} & \multirow{2}{*}{0} & \multirow{2}{*}{0} & \multirow{2}{*}{0.10 $\pm$ 0.30}\tabularnewline
(\textgreater 5\%) &  &  &  &  &  &  &  &  &  &  & \tabularnewline
\hline 
\end{tabular}
\par\end{centering}
\caption{Results from 10 runs of STM on bAbI 10k. Bold denotes best run.\label{tab:Results-from-10}}

\end{table*}

\subsection{Characteristics of memory-based neural networks\label{subsec:compare_types}}

Table \ref{tab:Characteristics-of-some} compares the characteristics
of common neural networks with memory. Biological plausibility is
determined based on the design of the model. It is unlikely that human
memory employs RAM-like behaviors as in NTM, DNC, and RMC. Fixed-size
memory is inevitable for online and life-long learning, which also
reflects biological plausibility. Relational extraction and recurrent
dynamics are often required in powerful models. As shown in the table,
our proposed model exhibits all the nice features that a memory model
should have. 

\begin{table*}
\begin{centering}
\begin{tabular}{ccccc}
\hline 
\multirow{2}{*}{Model} & Fixed-size  & Relational  & Recurrent & Biologically\tabularnewline
 & memory & extraction &  dynamics & plausible\tabularnewline
\hline 
RNN, LSTM & $\checked$ & $\vartimes$ & $\checked$ & $\checked$\tabularnewline
NTM, DNC & $\checked$ & $\vartimes$ & $\checked$ & $\vartimes$\tabularnewline
RMC & $\checked$ & $\checked$ & $\checked$ & $\vartimes$\tabularnewline
Transformer & $\vartimes$ & $\checked$ & $\vartimes$ & $\vartimes$\tabularnewline
UT & $\vartimes$ & $\checked$ & $\checked$ & $\vartimes$\tabularnewline
Attentional LSTM & $\vartimes$ & $\checked$ & $\checked$ & $\vartimes$\tabularnewline
\hline 
STM & $\checked$ & $\checked$ & $\checked$ & $\checked$\tabularnewline
\hline 
\end{tabular}
\par\end{centering}
\caption{Characteristics of some neural memory models\label{tab:Characteristics-of-some}}
\end{table*}

\end{document}